\title{\LARGE \bf
The Effect of Communication Topology on Scalar Field Estimation by Networked Robotic Swarms
}
\author{Ragesh K. Ramachandran$^{1}$ and Spring Berman$^{1}$% <-this % stops a space
\thanks{This work was supported by National Science Foundation (NSF) award no. CMMI-1363499.}% <-this % stops a space
\thanks{$^{1}$Ragesh K. Ramachandran and Spring Berman are with the School for Engineering of Matter, Transport and Energy, Arizona State University, Tempe, AZ 85287, USA  {\tt\small rageshkr@asu.edu, spring.berman@asu.edu}}%
}
\begin{document}

\maketitle
\thispagestyle{empty}
\pagestyle{empty}

%%%%%%%%%%%%%%%%%%%%%%%%%%%%%%%%%%%%%%%%%%%%%%%%%%%%%%%%%%%%%%%%%%%%%%%%%%%%%%%%
\begin{abstract}
This paper studies the problem of reconstructing a two-dimensional scalar field using a swarm of networked robots with local communication capabilities. We consider the communication network of the robots to form either a chain or a grid topology.  We formulate the reconstruction problem as  an optimization problem that is constrained by first-order linear dynamics on a large, interconnected system.  To solve this problem, we employ an optimization-based scheme that uses a gradient-based method with an analytical computation of the gradient. In addition, we derive bounds on the trace of observability Gramian of the system, which helps us to quantify and compare the estimation capability of chain and grid networks. A comparison based on a performance measure related to the $\mathcal{H}_2 $ norm of the system is also used to study robustness of the network topologies. Our resultsare validated using both simulated scalar fields and actual ocean salinity data.  
\end{abstract}

%We formalize a justification behind the heuristic of choosing a grid topology to approximate a 2D domain. The problem can be formulated as an optimization problem constrained by a large interconnected first-order linear system dynamics . We propose an optimization-based scheme to solve the estimation problem using gradient-based methods with an analytical computation of the gradient. We derive bounds on the trace of observability Gramian of the system, which helps us to quantify and compare the estimation capability of chain and grid networks. A comparison based on a performance measure related to the $\mathcal{H}_2 $ norm of the system is also used to study robustness of the network topologies. The results in the paper are validated using both simulated scalar fields and actual ocean salinity data.  

%\keywords{networked robotic systems; sensor networks; field estimation}
\begin{keywords}
	Networked robotic systems, sensor networks, field estimation.
\end{keywords}

%%%%%%%%%%%%%%%%%%%%%%%%%%%%%%%%%%%%%%%%%%%%%%%%%%%%%%%%%%%%%%%%%%%%%%%%%%%%%%%%
\section{INTRODUCTION}
\label{sec:intro}

\PARstart{S}{warm} robotics has emerged as important area of research in recent years. Robotic swarms can be employed in scenarios where an individual robot is incapable of or inefficient at performing a task.  Networked robotic swarms can be used for distributed sensing and estimation in various applications such as environmental monitoring, field surveillance, multi-target tracking, and geo-scientific exploration \cite{iyengar2012distributed} for large area monitoring. In numerous multi-robot applications like formation control\cite{desai2001modeling}, control of mobile platoons \cite{antonelli2006kinematic}, multi target tracking \cite{ahmad2013cooperative}, and sensor networks for field reconstruction \cite{pequito2013optimal} interesting problems related to network topology and configuration arise. One of the primary motivations behind the work done in this paper is to quantify the fundamental limitations that can emerge in these applications due to the chosen topology of the network structure. In particular,  chain and grid topologies are common network choices for  multi-robot applications. We find that even for first-order information dynamics, the topology of the network affects its performance on estimation and robustness to noise. This implies that topology of network could practically affect the performance of algorithms for large inter-connected multi-robot systems.

%These applications lead to 

%Certainly there is some amount of overlap between multi robot systems and robotic swarm. Nevertheless researchers have come up with criteria distinguishing swarm robotics \cite{csahin2004swarm}.  

%In other words we are trying to quantify effect of network topology on dynamics that evolve on the network which is common in many robotic applications. We find that even for a simple first order dynamics the topology of network affects its performance on estimation and robustness to noise. This asserts that topology of network could practically affect the performance of algorithms for large inter connected multi robot system.

In this paper, we present a method to estimate the full set of initial measurements of a static scalar field that are obtained by a networked robotic swarm using the temporal data collected by a subset of accessible robots. The robots communicate with their neighbors through a fixed communication network topology. A first-order linear dynamical model is used to describe the information flow in the network. This procedure can also be adapted to estimate a time-varying scalar field whose dynamics are slower than the network information dynamics.  From a control theory perspective, the problem is essentially to find the initial condition of a linear dynamical system given its inputs and outputs. The solution to this problem is associated with observability of the system. 

 % details of which are elaborated in \ref{sec:modeling}

Although there is a great deal of literature on optimal control, little work has addressed the optimal estimation of initial conditions other than through the inversion of the observability Gramian \cite{kasac2012initial}. In general, the observability of a linear dynamical system can be verified by using the Kalman rank condition \cite{Hespanha2009}.  However, checking the rank condition for large interconnected systems is computationally intensive due to the high dimensionality of the observability Gramian. For this reason, a less computationally intensive graph-theoretic characterization of observability has been more widely used than a matrix-theoretic characterization for large complex networked systems. The observability of complex networks is studied in \cite{Liu12022013} using a graph-based approach, which presents a general result that holds true for most of the chosen network parameters (the edge weights).  In \cite{Meng_Egerstedt4434659}, a graph-theoretic  approach based on equitable partitions graphs is used to derive necessary conditions for observability of networks. Alternately, \cite{Yuan2013} use a matrix-theoretic approach to develop a maximum multiplicity theory to characterize the exact controllability of a network in terms of the minimum number of required independent controller nodes based on the network spectrum. 

In our problem, we focus only on the grid and chain network topologies. The main reason for this choice is the fact that these networks are common candidates for approximating 1D and 2D domains in practical applications. Another major reason behind restricting our focus on the analysis of these networks is because the necessary and sufficient conditions for the observability spectral properties of these networks are well understood from literature \cite{Notarstefano2013,Edwards2013}. In terms of the analysis done in this paper we adopt a quantitative measure of observability based on the trace of observability Gramian similar to \cite{PasqualettiZB13,Pasqualetti20147039448,Yan2015,EnyiohaRPJ14,Muller:1972:AOC:2244776.2244812}, departing from graph theoretic methods used in \cite{Meng_Egerstedt4434659,Liu12022013,Notarstefano2013,Parlangeli2013}. 

%\textbf{Paper Contributions} 

The main contributions of this paper are as follows. First, we propose a method to estimate the initial condition of linear dynamical network system with large dimensions using an optimization framework and deriving the gradient required to solve it. Second, we derive bounds on the trace of the observability Gramian of the network system and use these results to compare the estimation capability of grid and chain networks. Third, we use a performance measure based on the $\mathcal{H}_2 $ norm of a system to quantify the robustness of these network system to noise.  We illustrate our approach on both simulated and actual two-dimensional scalar fields.

The  paper  is organized  as  follows. \ref{sec:math_prelim} introduces mathematical concepts and terminology that are used in the paper. \ref{sec:prob_def} describes the problem statement and outlines the assumptions made in its formulation. The network model is presented in \ref{sec:modeling}. \ref{sec:recons} delineates how the scalar field reconstruction can be posed as an optimization problem and computes the analytical gradient required for its solution. Simulation details and results are described in \ref{sec:simulation}.  We derive bounds on the trace of the observability Gramian in \ref{sec:comp_net_toplogy}, which aids us in comparing network topologies. \ref{sec:Perform analyse} discusses a performance analysis of the network topologies based on the $\mathcal{H}_2 $ norm of the system. Finally, \ref{sec:conclusion} concludes the paper and proposes future work.

\section{MATHEMATICAL PRELIMINARIES}
\label{sec:math_prelim}

% $[1\ 1\ ...\ 1]^T$
% (with $ \left| \mathit{V(\mathcal{G})} \right| = n$) (with $\left| \mathit{E(\mathcal{G})} \right| = m$)

A graph $\mathcal{G}$ can be defined as the tuple $\left( \mathit{V(\mathcal{G})}, \mathit{E(\mathcal{G})} \right)$, where $\mathit{V(\mathcal{G})}$ is a set of $N$ vertices, or {\it nodes}, and $\mathit{E(\mathcal{G})} = \left\lbrace (i,j) : i \neq j,\ i,j \in \mathit{V(\mathcal{G})} \right\rbrace$ is a set of $M$ {\it edges}.  Nodes $i$ and $j$ are called {\it neighbors}  if $(i,j) \in \mathit{E(\mathcal{G})}$. The set of neighbors of node $i$ is denoted by $ \mathcal{N}_i = \left\lbrace j : j \in \mathit{V(\mathcal{G})}, (i,j) \in \mathit{E(\mathcal{G})} \right\rbrace$.  The {\it degree} $d_i$ of a node $i$ is defined as $\left| \mathcal{N}_i \right|$.  We assume that $\mathcal{G}$ is finite, simple, and connected unless mentioned otherwise.  % undirected

% We represent the communication network of $N$ robots as a
% , which represent communication links between pairs of robots

A graph $\mathcal{G}$ is associated with several matrices whose spectral properties will be used to derive our results.  The \textit{incidence matrix} of a graph with arbitrary orientation is defined as $\mathbf{B}(\mathcal{G}) = [b_{ij}] \in \mathbb{R}^{N \times M}$, where the entry $b_{ij} = 1$ if $i$ is the initial node of some edge $j$ of $ \mathcal{G}$, $b_{ij} = -1$ if $i$ is the terminal node of some edge $j$ of $ \mathcal{G}$, and $b_{ij} = 0$ otherwise.  It can be shown that the left nullspace of $\mathbf{B}(\mathcal{G})$ is  $c\mathbf{1}_N,\ c \in \mathbb{R}$, where $\mathbf{1}_N$ is the $N \times 1$ vector of ones \cite{GodsilRoyle2001}.  The \textit{degree matrix} $\Delta(\mathcal{G})$ of a graph is given by $\Delta(\mathcal{G}) = Diag(d_1, ..., d_N)$. The {\it adjacency matrix} $\mathbf{A}(\mathcal{G}) = [a_{ij}] \in \mathbb{R}^{N \times N}$ has entries $a_{ij} = 1$ when $(i,j) \in \mathit{E(\mathcal{G})}$ and $a_{ij} = 0$ otherwise.  The  \textit{graph Laplacian} can be defined from these two matrices as $\mathbf{L}(\mathcal{G}) = \Delta(\mathcal{G}) - \mathbf{A}(\mathcal{G})$.  The Laplacian of an undirected graph is symmetric and positive semidefinite, which implies that it has real nonnegative eigenvalues $\lambda_i(\mathcal{G})$, $i = 1,...n$.  The eigenvalues can be ordered as $\lambda_1(\mathcal{G}) \leq \lambda_2(\mathcal{G}) \leq ... \leq \lambda_N(\mathcal{G})$, where $\lambda_1(\mathcal{G}) = 0$. The eigenvector corresponding to eigenvalue $\lambda_1(\mathcal{G})$ can be computed to be $\mathbf{1}_N $. By Theorem 2.8 of \cite{Egerstedt2010}, the graph is connected if and only if $\lambda_2(\mathcal{G}) > 0$. 

Several other matrices will be defined as follows.  An $n_1 \times n_2$ identity matrix will be denoted by $\mathbf{I}_{n_1 \times n_2}$, and an $n_1 \times n_2$ matrix of zeros will be denoted by $\mathbf{0}_{n_1 \times n_2}$.  The matrix $\mathbf{J}_N$ is defined as $\mathbf{J}_N = \mathbf{1}_N^T \mathbf{1}_N$. 

%  where $n_1,\ n_2 \in \mathbb{N}$
% , which is a $\{0,\pm 1 \} $ matrix with the rows and columns indexed by the $\mathit{V(\mathcal{G})}$ and $ \mathit{E(\mathcal{G})} $ respectively

%The  $n \times 1$ vector of ones is denoted by $\mathbf{1}_N$, with $n \in \mathbb{N} $.  

\begin{figure}[t]
	\centering
	\subfigure[Chain topology  \label{fig:Chain Topology} ]{\includegraphics[width=.8\linewidth,height = 4cm ]{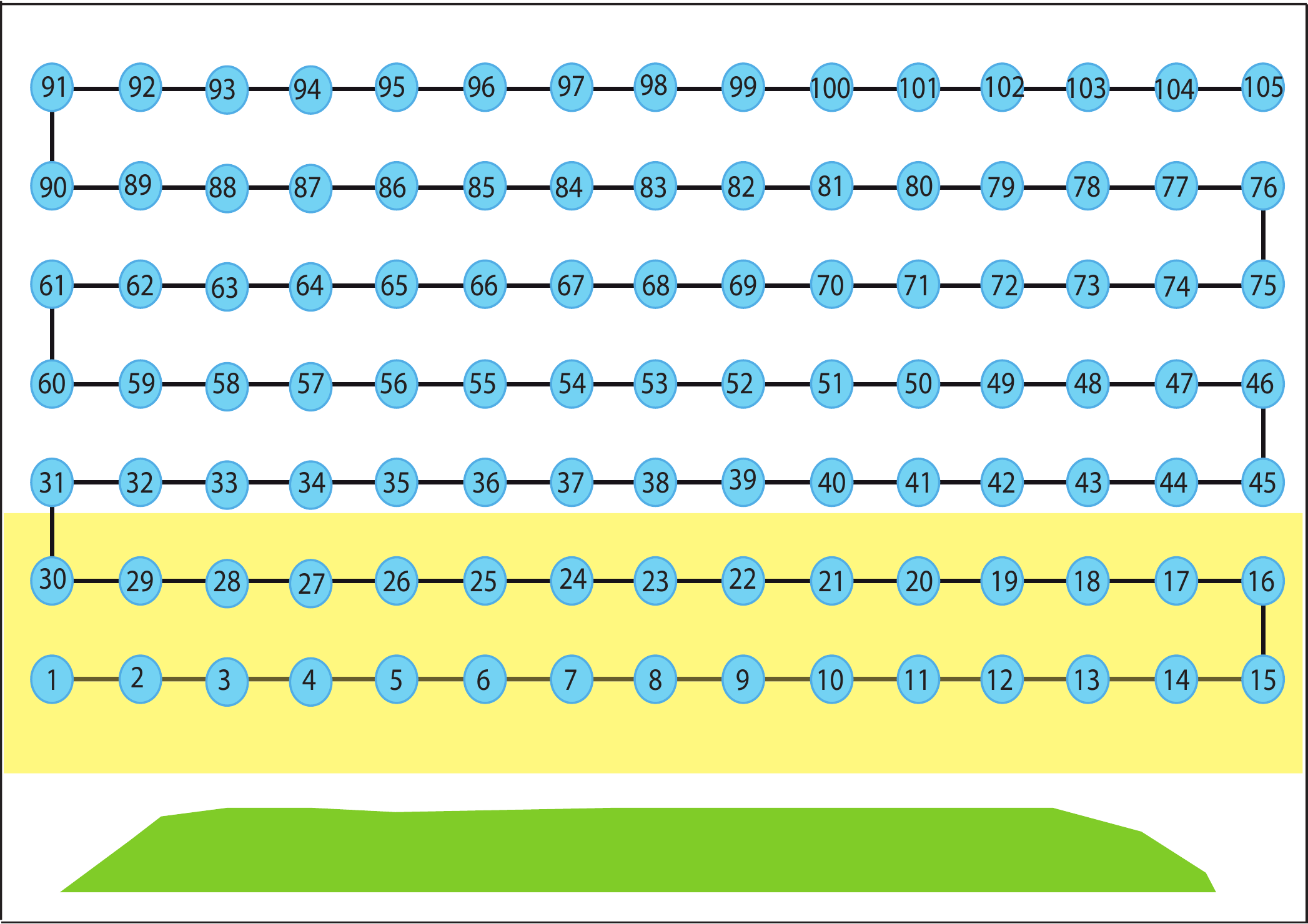}}	
	\subfigure[Grid topology \label{fig:Grid Topology}]{\includegraphics[width=.8\linewidth,height = 4cm ]{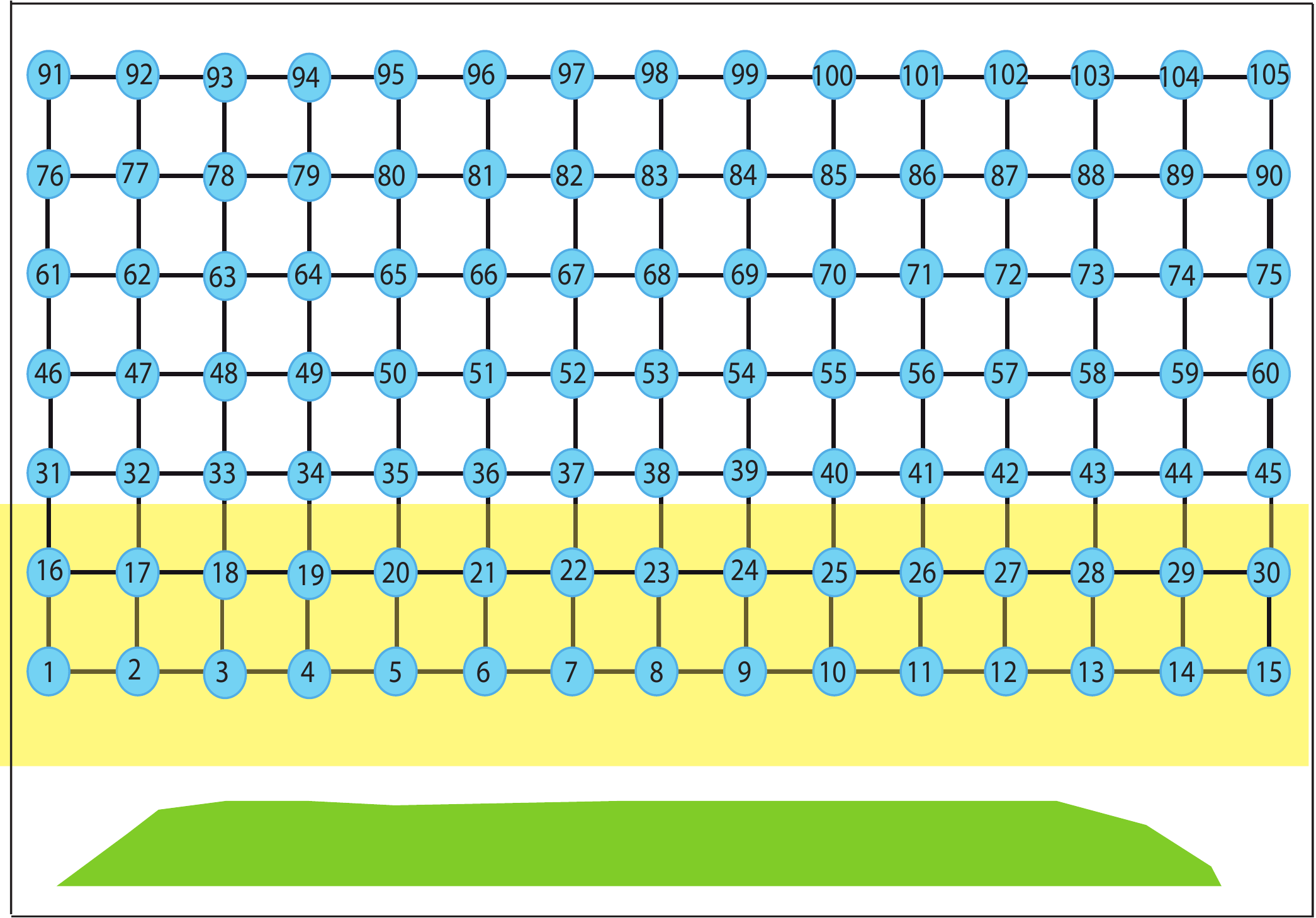}} 
	\caption{Illustration of the chain and grid network topologies. The blue circles are robots and are labeled by numbers. Robots in the yellow region are accessible robots.}
	\label{fig:topologies}
\end{figure}

%%%%%%%%%%%% Figure

%\begin{figure}[t]
%	\centering
%	
%	\begin{tabular}{cc}
%		\centering	
%		\hspace{-3mm}
%		
%			\subfigure[$Chain\ Topology$  \label{fig:Chain Topology} ]{\includegraphics[width=.45\linewidth,height = 4cm ]{}}		
%		&
%		
%		%\centering
%		\subfigure[$Grid\ Topology$ \label{fig:Grid Topology}]{\includegraphics[width=.45\linewidth,height = 4cm ]{}} 
%				
%		
%	
%	\end{tabular}
%	\caption{Illustration of the chain and grid network topologies. The blue circles are robots and are labeled by numbers. Robots in the yellow region are accessible robots.}
%	\label{fig:topologies}
%\end{figure}
%

%%%%%%%%%%%%

\section{PROBLEM STATEMENT}
\label{sec:prob_def}

Consider a set of $N$ robots with local communication ranges and local sensing capabilities. The robots are arranged in a bounded domain as shown in  \ref{fig:topologies}. Each robot is capable of measuring the value of a scalar field at its location and communicating this value to its neighbors, which are defined as the robots that are within its communication range. The robots take measurements at some initial time and transmit this information using a nearest-neighbor averaging rule, which is described in \ref{sec:modeling}. As shown in \ref{fig:topologies}, we assume to have direct access only to the measurements of a small subset of the robots, which we call the {\it accessible robots}, which for instance may be closer to a particular boundary of the domain. We also assume that the robot positions are predetermined and that the robots employ feedback mechanisms to regulate their positions in the presence of external disturbances.

%\spr{known?} and \spr{the robots' feedback mechanisms make their positions} robust to external disturbances. 
%Here we assume the grid network to be a square grid.

We address the problem of reconstructing the initial measurements taken by all the robots from the measurements of the accessible robots. This can be formulated as the problem of determining whether the information flow dynamics in the network are observable with respect to a set of given outputs. As mentioned in \ref{sec:intro}, this is a difficult problem to solve for arbitrary communication topologies.  Hence, we restrict our investigation to chain and grid communication topologies, whose structural observability properties are well-studied \cite{Notarstefano2013,Parlangeli2013}. We will focus on comparing the chain and grid topologies in terms of their utility as communication networks to be used in reconstructing an initial set of data. 

%to identify their limitations to be used as communication networks in reconstruction. 

\section{NETWORK MODEL}
\label{sec:modeling}

The communication network among the $N$ robots is represented by an undirected graph $ \mathcal{G} = \left( \mathit{V(\mathcal{G})}, \mathit{E(\mathcal{G})} \right)$, where vertex $i \in \mathit{V(\mathcal{G})}$ denotes robot $i$, and robots $i$ and $j$ can communicate with each other if $(i, j) \in \mathit{E(\mathcal{G})}$.   Let $x_i(t) \in \mathbb{R}$ be a scalar data value obtained by robot $i$ at time $t$. We define the information flow dynamics of robot $i$ as
\begin{equation}
\frac{dx_i}{dt} = \sum_{(i, j) \in \mathcal{N}_i } (x_j-x_i).  % \mathrm{d} 
\label{eqn:agent_dyn}
\end{equation}
  
% $  \mathit{V(\mathcal{G})} = \left\lbrace v_1,\ ...,\ v_N\right\rbrace $ represent each robot and  $\mathit{E(\mathcal{G})}$ indicating the communication among them as: $ i,j \in \mathit{V(\mathcal{G})} $ then $(i,j) \in \mathit{E(\mathcal{G})} $ if and only if agents $i$ and $j$ can share information between them. 
%where $\mathcal{N}_i$ is the neighbor set of robot $i$. 
% By combining the information flow dynamics of individual robots we can derive the system information dynamics as, 

The vector of all robots' information at time $t$ is denoted by $ \mathbf{X}(t) = [x_1(t) ~x_2(t) ~... ~x_N(t)]^T$.  Using \ref{eqn:agent_dyn} to define the dynamics of $x_i(t)$ for each robot $i$, we can define the information flow dynamics over the entire network as
\begin{eqnarray}
\dot{\mathbf{X}}(t) &=& - \mathbf{L}(\mathcal{G}) \mathbf{X}(t), \nonumber\\ 
\mathbf{X}(0) &=& \mathbf{X}_0,
\label{eqn:sys_dyn}
\end{eqnarray}
where $\mathbf{X}_0 \in \mathbb{R}^N$ contains the unknown initial values of the data obtained by the robots at time $t=0$, which is the information that we want to estimate. % \spr{the initial robot measurements of the field?}.

%and  $ \mathbf{X}_0 \in \mathbb{R}^N$ is the initial condition of the system consisting of the measurements made by the robots at time $t=0$. 

We define $\mathit{Id} = \left\lbrace I_1, I_2, ..., I_k \right\rbrace  \subseteq \mathit{V(\mathcal{G})}$ as the index set of the {\it accessible robots}.  The output equation for the linear system \ref{eqn:sys_dyn} is given by
\begin{equation}
\mathbf{Y}(t) =  \mathbf{C} \mathbf{X}(t), \\ 
\label{eqn:sys_out}
\end{equation}
where $\mathbf{Y}(t) \in \mathbb{R}^k$ and $\mathbf{C} =[c_{ij}] \in \mathbb{R}^{k \times N}$ is a sparse matrix whose entries are defined as  $c_{ij} = 1$ if $ i=j$ and $i \in \mathit{Id} $, $c_{ij} = 0$ otherwise.  If we number the robots in such a way that the first $k$ output nodes (robots) are ordered from $1$ to $k$, then $\mathbf{C} = \left[ \mathbf{I}_{k \times k}\ \mathbf{0}_{k \times (N - k)} \right]$.

%with nonzero elements occurring only on the diagonal such that $c_{ii} = 1 $ if and only if $ i \in \mathit{Id} $.
%\begin{equation}
%\mathit{Id} = \left\lbrace I_1, I_2, ..., I_k\right\rbrace  \subseteq \mathit{V(\mathcal{G})}
%\label{eqn:index set}
%\end{equation} 
%where $I_k$ is the index of $k^{th}$ accessible robot.

 As previously discussed, we focus on the case where the network has a chain or grid communication topology. The type of topology affects the network dynamics through its associated \textit{graph Laplacian} $\mathbf{L}(\mathcal{G}_g)$. Let $ \mathcal{G}_c$ and $ \mathcal{G}_g $ represent communication networks with a chain topology and a grid topology, respectively. When the robots in each network are labeled as shown in \ref{fig:Chain Topology} and \ref{fig:Grid Topology}, then it can be shown that $\mathbf{L}(\mathcal{G}_c)$ and $ \mathbf{L}(\mathcal{G}_g) $ \cite{Edwards2013} have the following structures: % respectively.
\begin{equation}
 \mathbf{L}(\mathcal{G}_c) = 
 \begin{bmatrix}
 1  & -1  & 0 & \cdots & \cdots & \cdots & \cdots & 0 \\
 -1  & 2  & -1  & \ddots & && & \vdots \\
 0 & -1  & 2 & -1  & \ddots & &  & \vdots \\
 \vdots & \ddots & \ddots & \ddots & \ddots & \ddots &  & \vdots \\
 \vdots & & \ddots & \ddots & \ddots & \ddots & \ddots& \vdots\\
 \vdots  & & & \ddots & -1  & 2  &  -1  & 0\\
 \vdots  & && & \ddots & -1  & 2  &  -1\\
 0 & \cdots &  \cdots & \cdots & \cdots & 0 & -1 & 1  \\
 \end{bmatrix} 
\label{eqn:Lap_chain}
\end{equation}  
and
\begin{equation}
\mathbf{L}(\mathcal{G}_g) = 
\begin{bmatrix}
\mathbf{D}_1  & -\mathbf{I}  & 0 & \cdots & \cdots & \cdots & \cdots & 0 \\
-\mathbf{I}  & \mathbf{D}_2  & -\mathbf{I}  & \ddots & && & \vdots \\
0 & -\mathbf{I}  & \mathbf{D}_2 & -\mathbf{I}  & \ddots & &  & \vdots \\
\vdots & \ddots & \ddots & \ddots & \ddots & \ddots &  & \vdots \\
\vdots & & \ddots & \ddots & \ddots & \ddots & \ddots& \vdots\\
\vdots  & & & \ddots & -\mathbf{I}  & \mathbf{D}_2  &  -\mathbf{I}  & 0\\
\vdots  & && & \ddots & -\mathbf{I}  & \mathbf{D}_2 &  -\mathbf{I}\\
0 & \cdots &  \cdots & \cdots & \cdots & 0 & -\mathbf{I} & \mathbf{D}_1  \\
\end{bmatrix},
\label{eqn:Lap_grid}
\end{equation}
where
\begin{align}
\mathbf{D}_1 = 
\begin{bmatrix}
2  & -1     &   \cdots     &   \cdots     &     0       \\
-1 & 3      & -1     &        &     \vdots        \\
\vdots &\ddots  & \ddots & \ddots &      \vdots       \\
\vdots &       &   -1   &  3     &  -1      \\
0  &     \cdots   &  \cdots      &     -1 &   2       \\
\end{bmatrix}, \nonumber\\
\mathbf{D}_2 = 
\begin{bmatrix}
3  & -1     &   \cdots      &     \cdots    &0\\
-1 & 4      & -1     &        & \vdots \\
\vdots &\ddots  & \ddots & \ddots    & \vdots  \\
\vdots &        &   -1   &  4        &-1\\
0  &     \cdots    &    \cdots     &     -1 &3\\
\end{bmatrix}. \nonumber
\end{align}
Here, $ \mathbf{L}(\mathcal{G}_g) $ is a $(l_1 l_2) \times (l_1 l_2)$ matrix and $\mathbf{D}_1, \mathbf{D}_2$ are both $l_1 \times l_1$ matrices, with $l_1 l_2 = N$. Without loss of generality, we assume that the grid is square, meaning that $l_1 = l_2 = l$. We direct the reader to \cite{Banham2006} for a numerical example of  $\mathit{L(\mathcal{G}_g)}$. 

% it has the same numbers of vertices on  which means 

The graph Laplacians $\mathbf{L}(\mathcal{G}_c)$ and $\mathbf{L}(\mathcal{G}_g)$ are constructed based on the numbering of the vertex sets $ \mathit{V(\mathcal{G}_c)} $ and $ \mathit{V(\mathcal{G}_g)} $ that is shown in \ref{fig:topologies}. Graphs that are constructed by reordering the vertices of the graphs shown in \ref{fig:topologies}  are isomorphic to the graphs in the figure. Isomorphic graphs are also isospectral \cite{Wilson_Zhu_20082833}. %[What does this imply for our results?]}

Since the system \ref{eqn:sys_dyn} is linear, its solution is \cite{Hespanha2009}
\begin{equation}
\mathbf{X}(t) = \mathbf{e}^{- \mathbf{L}(\mathcal{G})t} \mathbf{X}_0.
\label{eqn:sys_dyn_sol}
\end{equation}

By combining \ref{eqn:sys_out} and \ref{eqn:sys_dyn_sol}, we obtain the map between the unknown initial data $\mathbf{X}_0$ and the measured output $\mathbf{Y}(t)$ as
\begin{equation}
\mathbf{Y}(t) = \mathbf{C}\mathbf{e}^{- \mathbf{L}(\mathcal{G})t} \mathbf{X}_0.
\label{eqn:in_out_map}
\end{equation}

%Therefore, the reconstruction problem can be formulated as the inversion of \ref{eqn:in_out_map}, which is discussed in the next section.

%\ref{sec:recons}.

\section{FIELD RECONSTRUCTION}
\label{sec:recons}

The problem of scalar field reconstruction can now be framed as an inversion of the map given by \ref{eqn:in_out_map}. From linear systems theory, the property of {\it observability} refers to the ability to determine an initial state $ \mathbf{X}_0$ from the inputs and outputs of a linear dynamical system \cite{Hespanha2009}.  For systems defined by \ref{eqn:sys_dyn} with an associated chain or grid topology, the conditions for observability are well-studied \cite{Notarstefano2013}. This ensures that the reconstruction problem can be solved for the types of networks that we consider. 

We solve the scalar field reconstruction problem by posing it as an optimization problem. The optimization procedure uses observed data  $ \hat{\mathbf{Y}}(t)$ from the accessible robots over the time interval $t \in [0\ T]$ to recover $ \mathbf{X}_0 $. The goal of the optimization routine is to find the state $\mathbf{X}_0$ that minimizes the normed distance between this observed data, $\hat{\mathbf{Y}}(t)$, and the output $\mathbf{Y}(t) $ computed using \ref{eqn:in_out_map}. Therefore, we can frame our optimization objective as the computation of $ \mathbf{X}_0$ that minimizes the functional $J(\mathbf{X}_0)$, defined as
\begin{equation}
 J(\mathbf{X}_0) = \frac{1}{2}\int_{0}^{T}\left \| \mathbf{Y}(t) - \hat{\mathbf{Y}}(t) \right \|_2^2 dt + \frac{\lambda}{2}\left \| \mathbf{X}_0 \right \|^2,
\label{eqn:obj_fun}
\end{equation}
subject to the constraint given by \ref{eqn:in_out_map}. Here, $\lambda$ is the Tikhonov regularization parameter, which is added to the objective function to prevent $\mathbf{X}_0 $ from becoming large due to noise in the data \cite{Boyd:2004:CO:993483}.

% In order to execute the gradient descent method we need to
%As a result of these computations, we obtain the gradient of \ref{eqn:obj_fun} with respect to $\mathbf{X}_0$ as %shown below in \ref{eqn:pre gradient}.

The convexity of  $J(\mathbf{X}_0)$ ensures the convergence of gradient descent methods to its global minima. We use one such method to compute the $ \mathbf{X}_0  $ that minimizes this functional. The method requires us to compute the gradient of  $J(\mathbf{X}_0)$ with respect to $ \mathbf{X}_0 $. This is done by combining \ref{eqn:in_out_map} and \ref{eqn:obj_fun}, then taking the G\^{a}teaux derivative of the resulting expression with respect to $ \mathbf{X}_0  $ \cite{Luenberger:1997:OVS:524037}. Defining $\Psi(t) = \mathbf{C}\mathbf{e}^{- \mathbf{L}(\mathcal{G})t}$, the gradient of  $J(\mathbf{X}_0)$ can be computed in this way as:
\begin{equation}
\delta J(\mathbf{X}_0) =\int_{0}^{T} \left( \Psi(t) \right )^* \left( \Psi(t) \mathbf{X}_0 - \hat{\mathbf{Y}}(t) \right ) dt + \lambda \mathbf{X}_0, 
\label{eqn:pre gradient}
\end{equation}
where $ \left(  \Psi(t) \right )^* $ is the Hermitian adjoint of $\Psi(t) $, which in this case is simply the Hermitian transpose \cite{Luenberger:1997:OVS:524037}. 

%If we view the gradient equation (\ref{eqn:pre gradient}) from a computational perspective. 

The most computationally intensive part of calculating \ref{eqn:pre gradient} is computing the matrix exponential in $\Psi(t)$. There has been a great deal of literature about approximate computation of the matrix exponential \cite{Hochbruck1997doi:10.1137/S0036142995280572,Orecchia:2012:AEL:2213977.2214080} which by definition is an infinite matrix series. In general, finding the matrix exponential is a computationally hard problem for very large matrices and computing them can be error-prone if not done carefully, especially if spectral decomposition \cite{Strang88} of the matrix is not possible \cite{Moler2003doi:10.1137/S00361445024180}. We can calculate the gradient by noting that $\mathbf{Y}(t) = \Psi(t)\mathbf{X}_0$ by \ref{eqn:in_out_map}, applying a change of variables $\tau = T - t$ to the integral term in \ref{eqn:pre gradient}, and defining $\hat{u}(\tau) \equiv \mathbf{Y}(T - \tau) - \hat{\mathbf{Y}}(T - \tau)$:
\begin{eqnarray}
\int_{0}^{T} \left( \Psi(t) \right )^* \left( \Psi(t) \mathbf{X}_0 - \hat{\mathbf{Y}}(t) \right ) dt && \nonumber \\
 && \hspace{-5cm} = ~\int_{0}^{T} \left( \Psi(T-\tau) \right )^* \left( \mathbf{Y}(T- \tau) - \hat{\mathbf{Y}}(T - \tau) \right ) d\tau \nonumber \\
   && \hspace{-5cm} = ~\int_{0}^{T}  \mathbf{e}^{- \mathbf{L}^*(\mathcal{G})(T - \tau)}  \mathbf{C}^*   \hat{u}(\tau)  d\tau.   \nonumber
\end{eqnarray}

%Let $ \mathbf{Y}(T - \tau) - \hat{\mathbf{Y}}(T - \tau) = \hat{u}(\tau)$.  Then,
%\begin{equation}
%	\int_{0}^{T}  \mathbf{e}^{- \mathbf{L}^*(\mathcal{G})(T - \tau)}  \mathbf{C}^*   \hat{u}(\tau)  d\tau \nonumber
%\end{equation}

This expression can be thought of as the solution $P(\tau)$ of the following differential equation at time $\tau=T$ \cite{Hespanha2009}: % ($P(T)$) given by \ref{eqn:adjoint}
\begin{equation}
\frac{dP}{d\tau} =  - \mathbf{L}^*(\mathcal{G})P(\tau) + \mathbf{C}^*\hat{u}(\tau), ~~~P(0) = 0.
\label{eqn:adjoint}
\end{equation}
Using this result, the gradient \ref{eqn:pre gradient} can be written as 
\begin{equation}
\delta J(\mathbf{X}_0) = P(T) + \lambda \mathbf{X}_0.
\label{eqn:gradient}
\end{equation}
To compute the gradient, we can solve \ref{eqn:adjoint} forward to find $P(T)$.

\section{SIMULATIONS}
\label{sec:simulation}

%%%%%%%%%%%%%%% figure

\begin{figure}[t] %%%%%%%% chain
	\centering
	
	\begin{tabular}{ccc}
		\centering	
		\hspace{-3mm}
		
		\subfigure[Actual field  \label{fig:Acutal_chain} ]{\includegraphics[width=.3\linewidth,height = 4cm ]{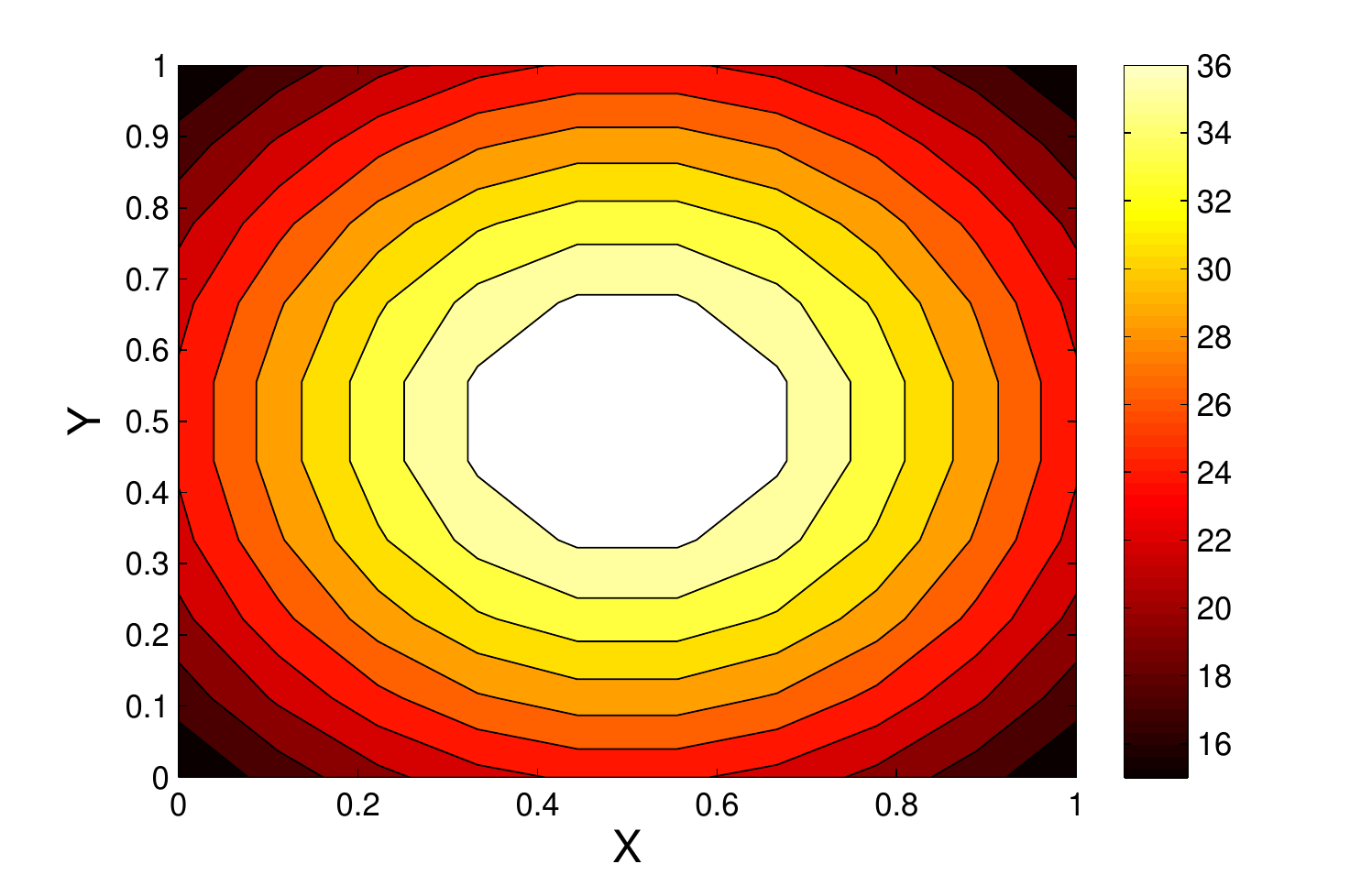}}		
		&
		
		%\centering
		\subfigure[Estimated field \label{fig:estimated chain}]{\includegraphics[width=.3\linewidth,height = 4cm ]{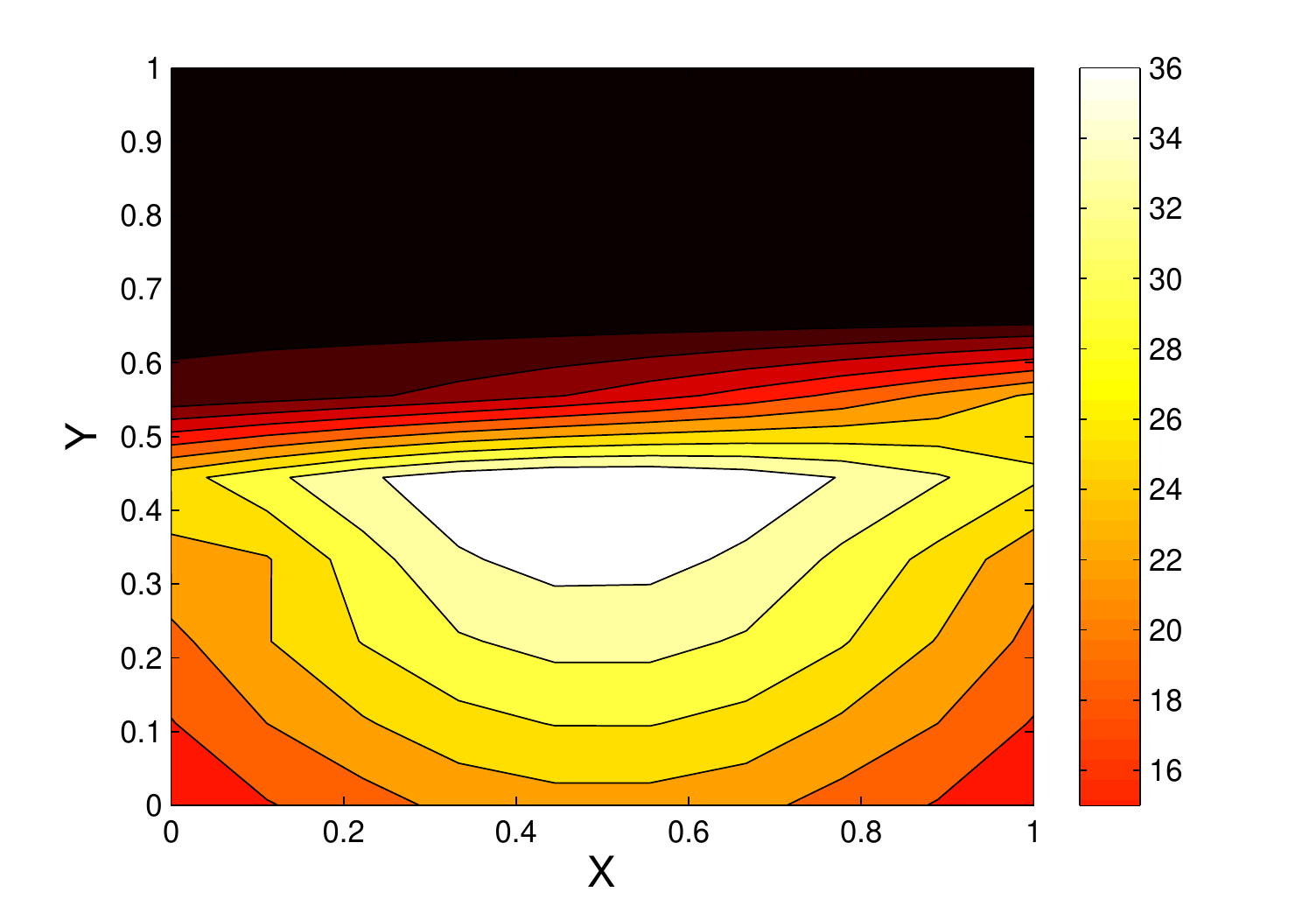}} 
		&
		%\centering
		\subfigure[Absolute value of error \label{fig:error chain}]{\includegraphics[width=.3\linewidth,height = 4cm ]{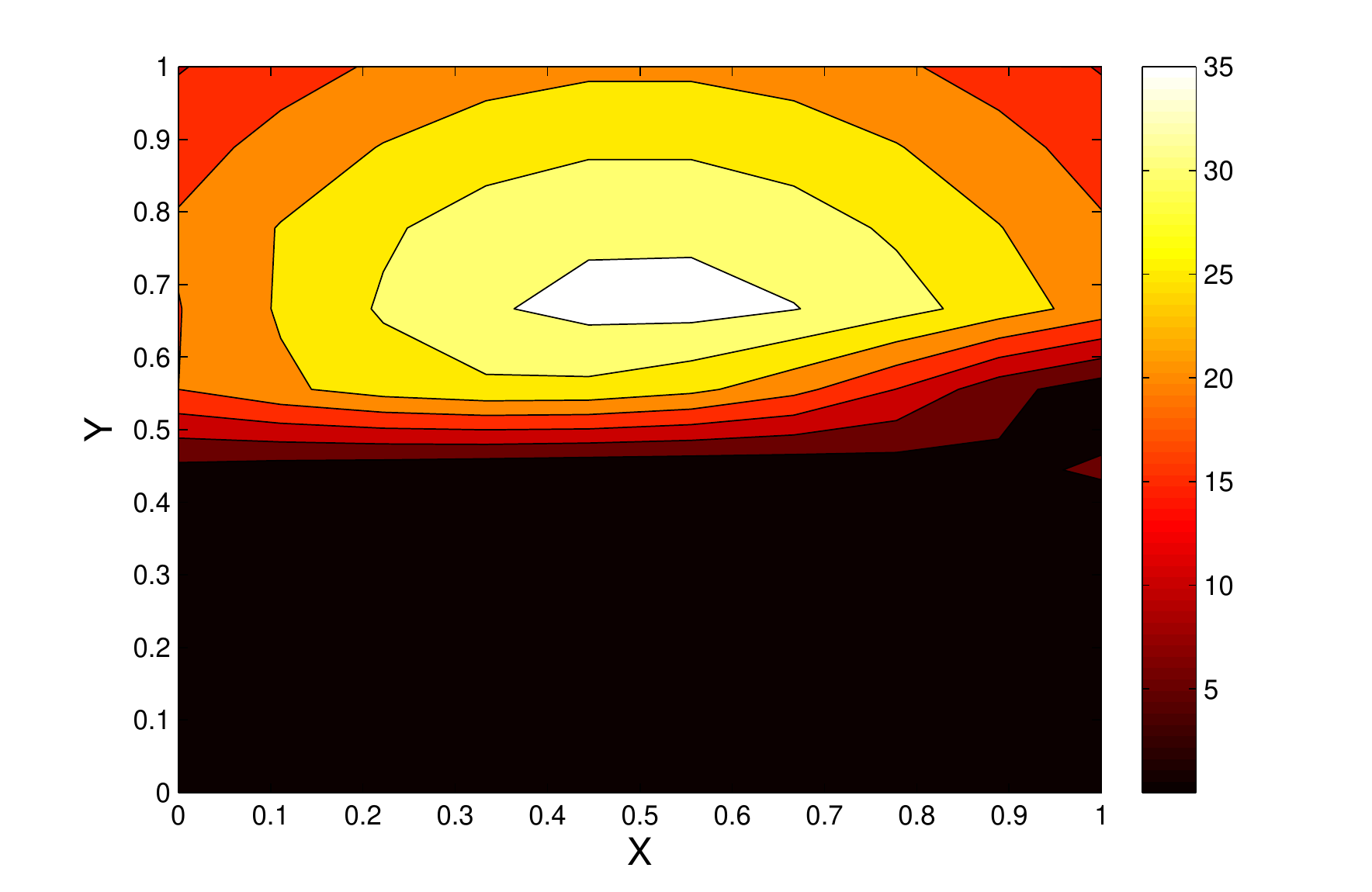}}

	\end{tabular}
	\caption{Gaussian function estimation using 100 robots communicating using a chain topology using temporal data acquired from 30 robots for a time period of 50 seconds }
	\label{fig:estimate chain}
\end{figure}

\begin{figure}[t] %%%%%%%% grid
	\centering
	
	\begin{tabular}{ccc}
		\centering	
		\hspace{-3mm}
		
		\subfigure[Actual field  \label{fig:Acutal_grid} ]{\includegraphics[width=.3\linewidth,height = 4cm ]{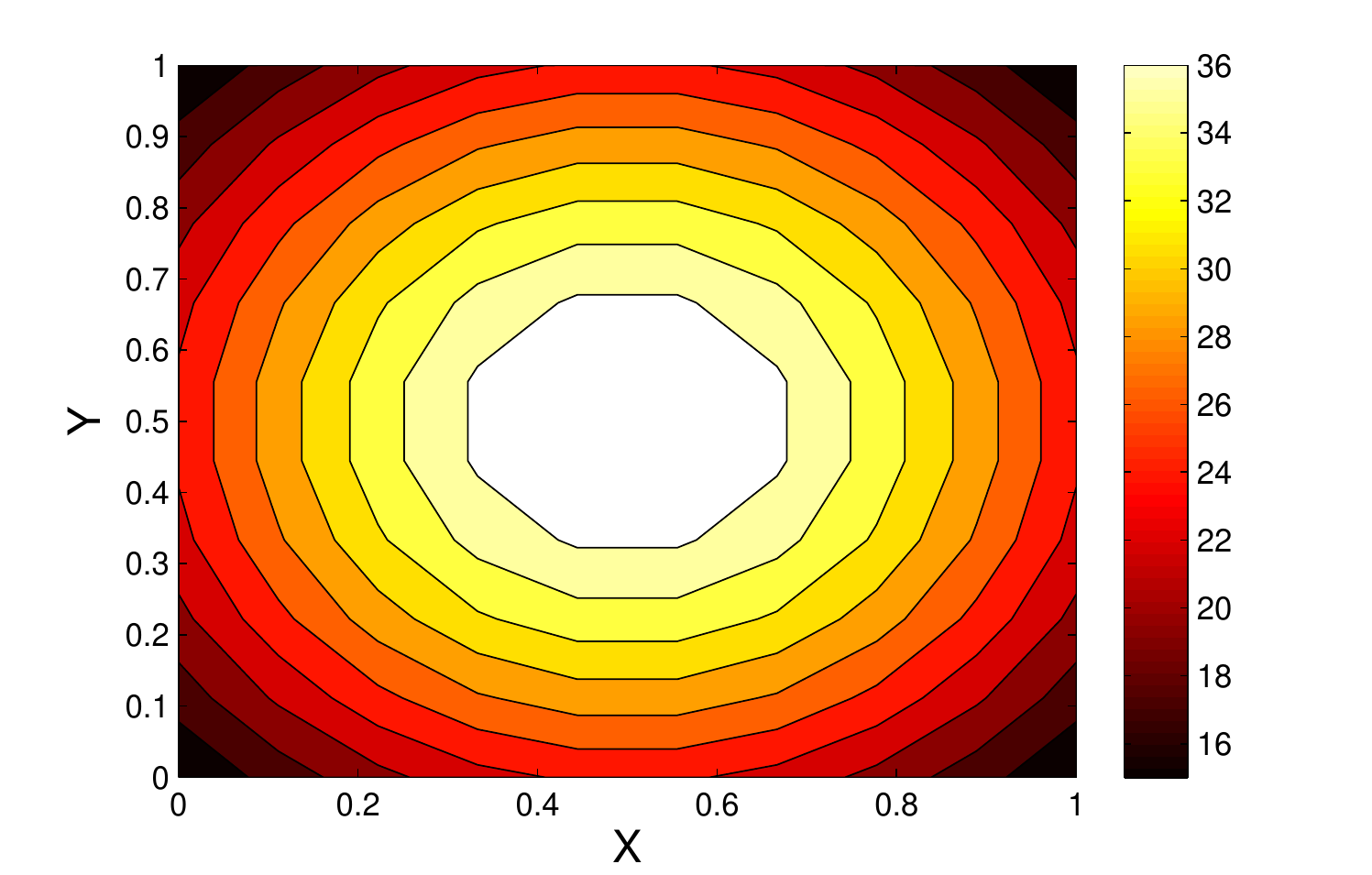}}		
		&
		
		%\centering
		\subfigure[Estimated field \label{fig:estimated grid}]{\includegraphics[width=.3\linewidth,height = 4cm ]{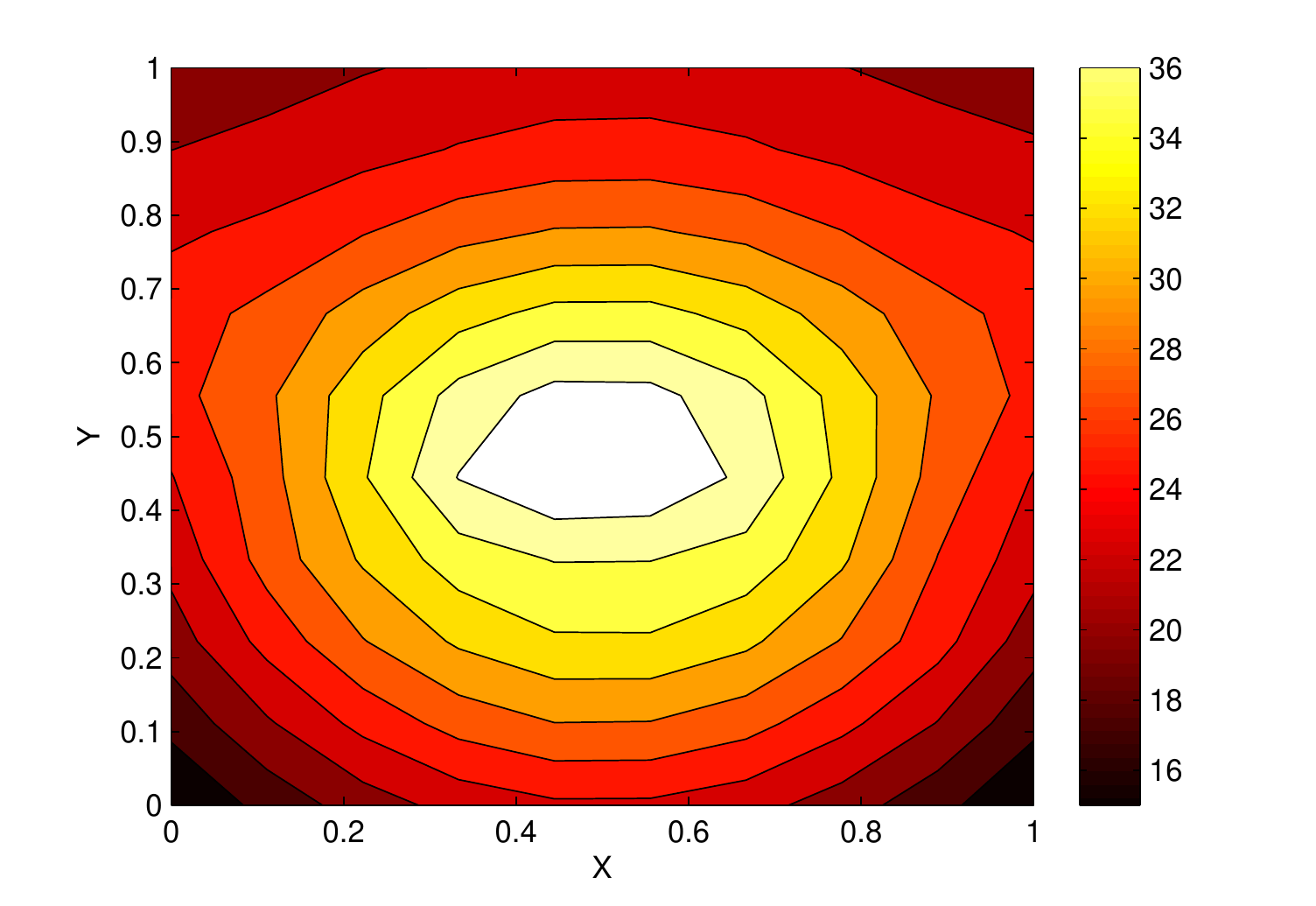}} 
		&
		%\centering
		\subfigure[Absolute value of error \label{fig:error grid}]{\includegraphics[width=.3\linewidth,height = 4cm ]{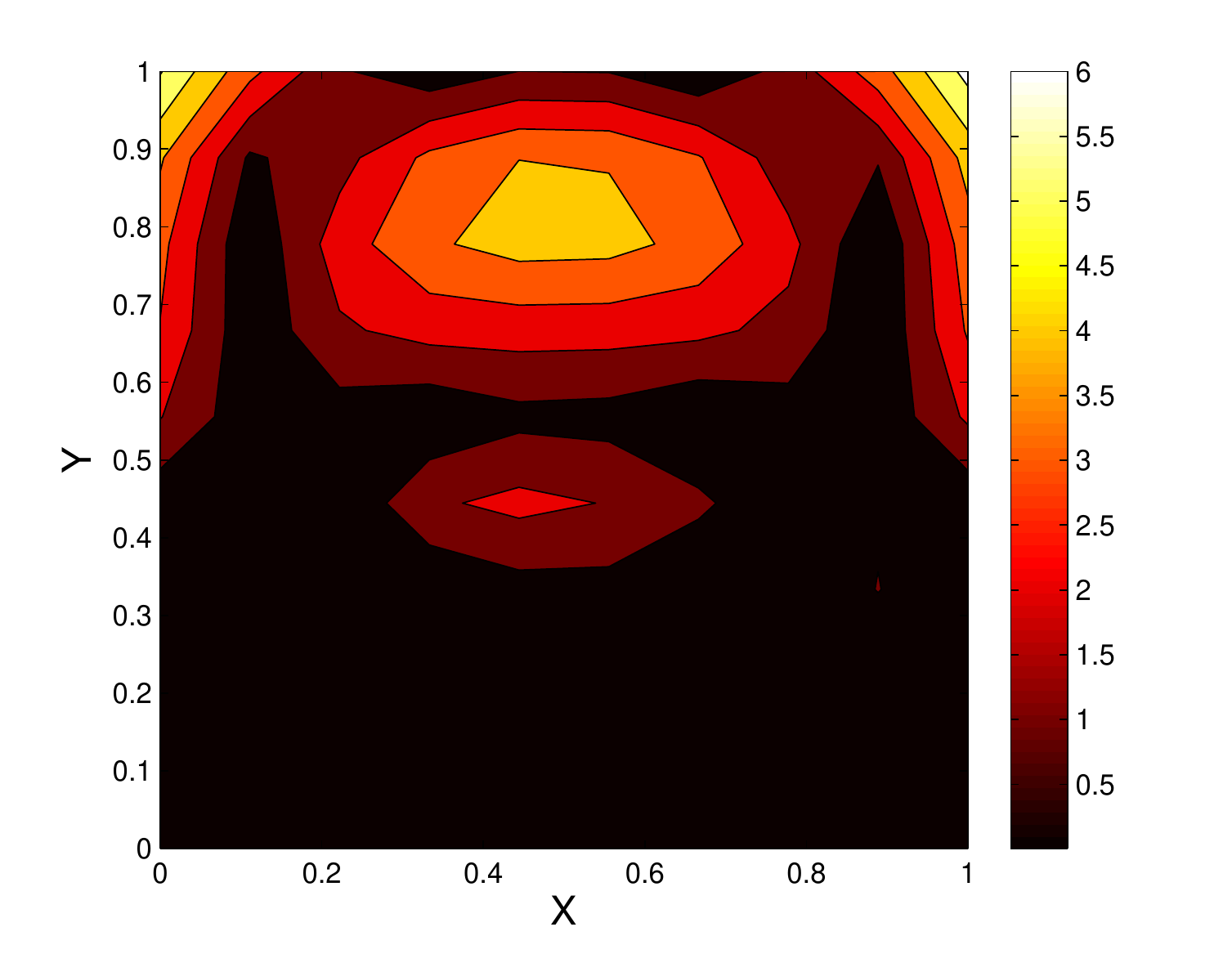}}

	\end{tabular}
	\caption{Gaussian function estimation using 100 robots communicating using a grid topology using temporal data acquired from 30 robots for a time period of 50 seconds }
	\label{fig:estimate grid}
\end{figure}

\begin{figure}[t] %%%%%%%% grid saline
	\centering
	
	\begin{tabular}{ccc}
		\centering	
		\hspace{-3mm}
		
		\subfigure[Actual field  \label{fig:Acutal_grid_saline} ]{\includegraphics[width=.3\linewidth,height = 3.5cm ]{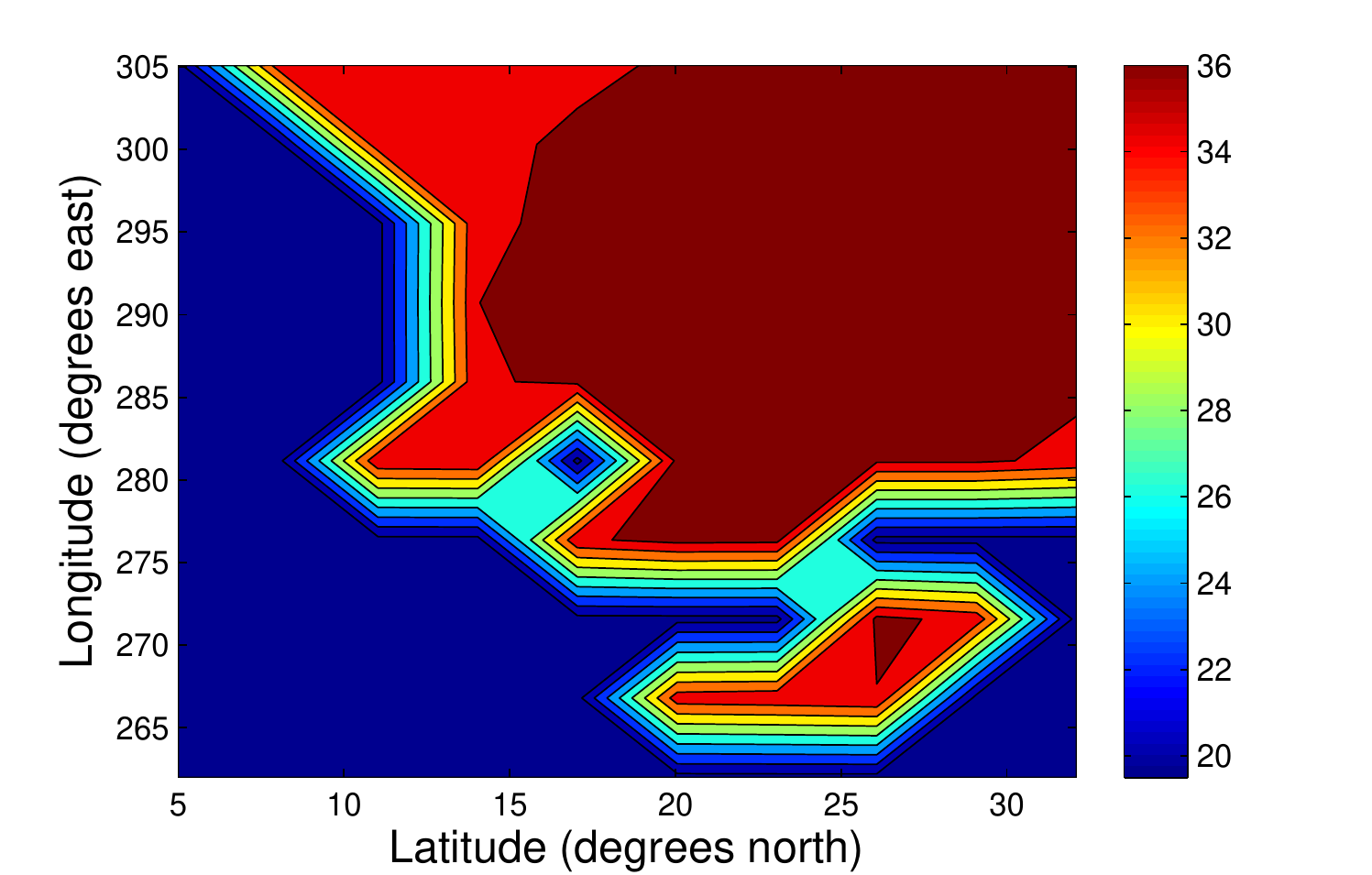}}		
		&
		
		%\centering
		\subfigure[Estimated field \label{fig:estimated grid_saline}]{\includegraphics[width=.3\linewidth,height = 3.5cm ]{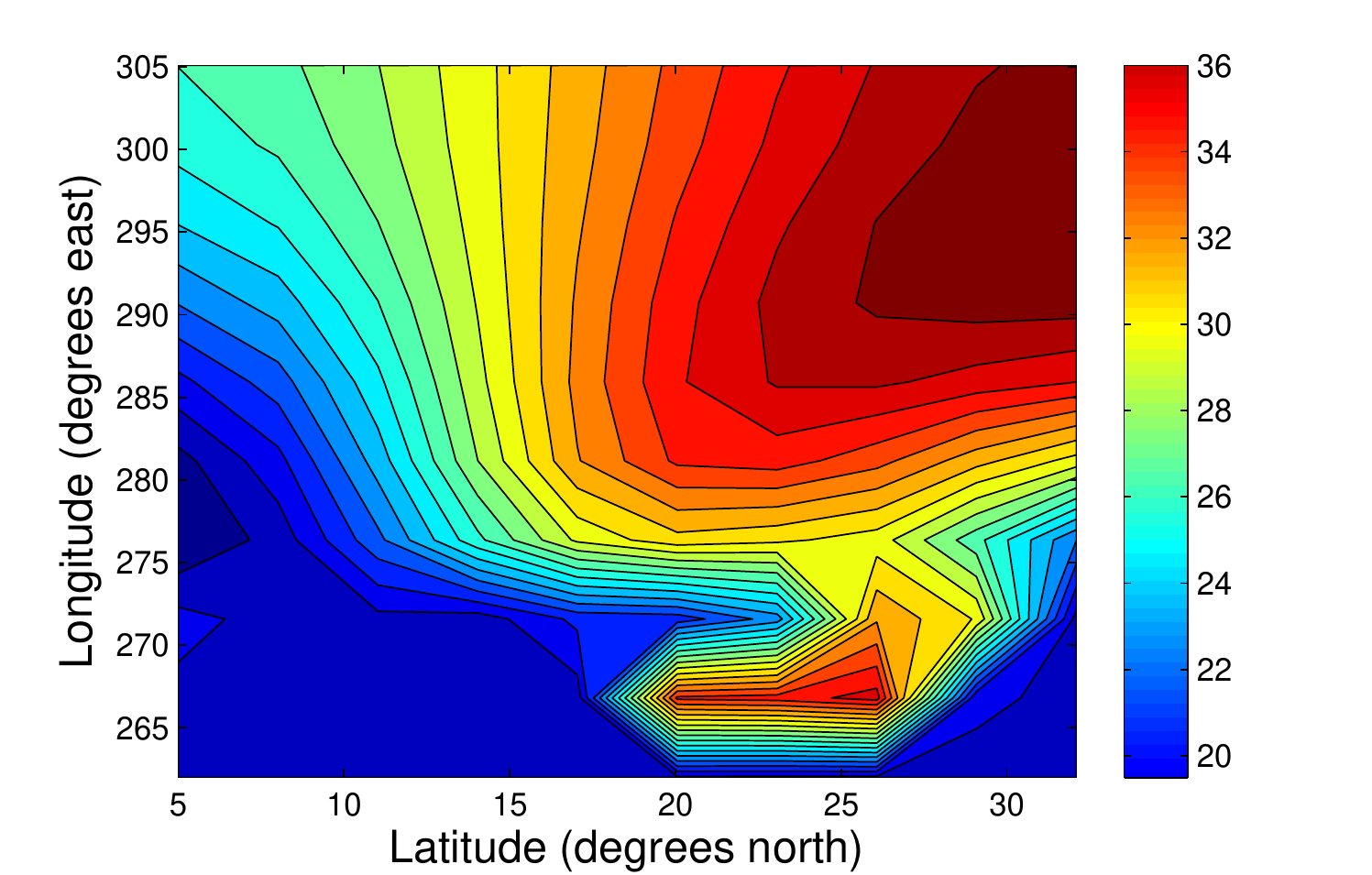}} 
		&
		%\centering
		\subfigure[Absolute value of error \label{fig:error grid_saline}]{\includegraphics[width=.3\linewidth,height = 3.5cm ]{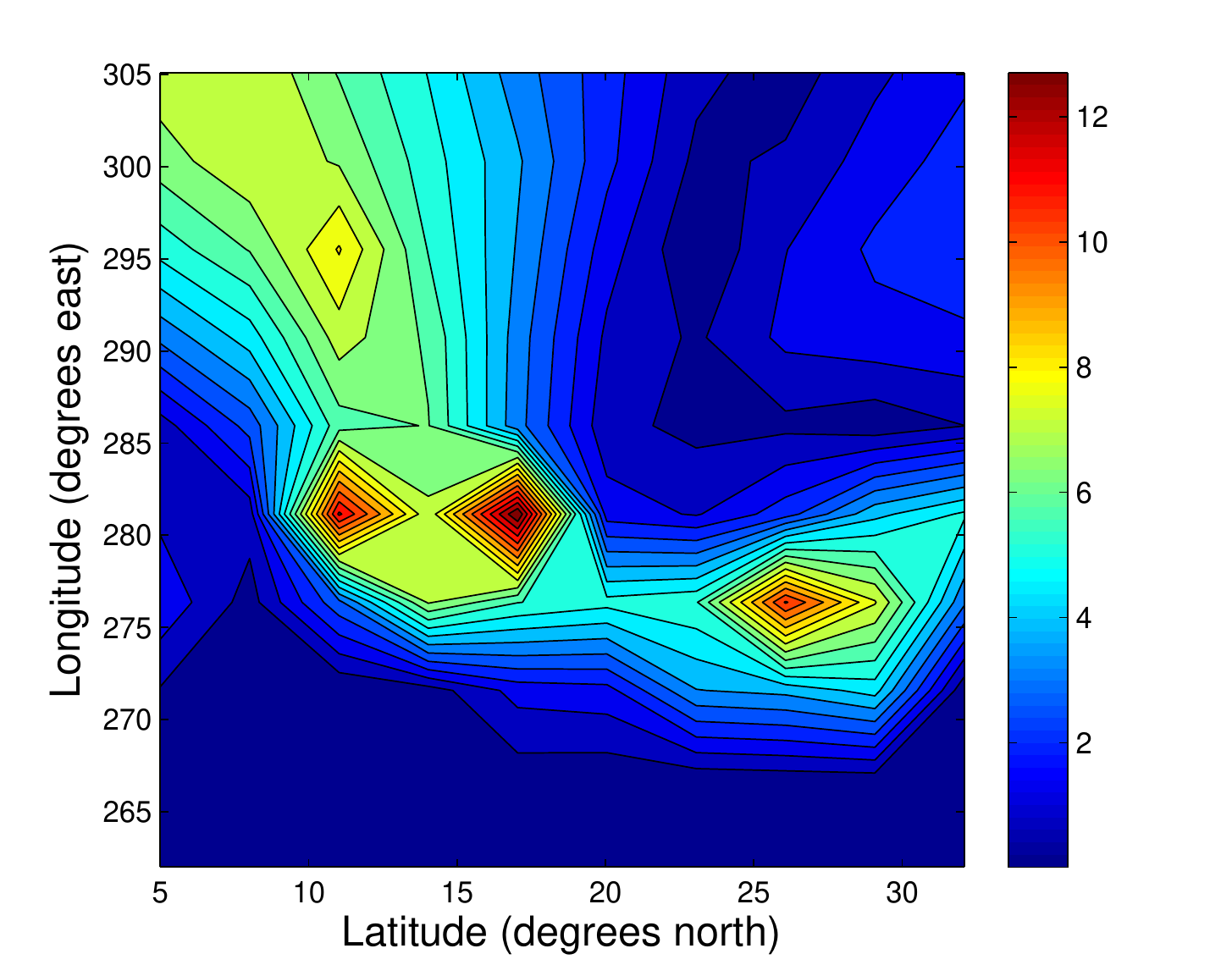}}

	\end{tabular}
	\caption{Estimation based on the data about salinity (psu) of Atlantic ocean at a depth of $25$ meter,  using 100 robots communicating using a grid topology using temporal data acquired from 30 robots for a time period of 50 seconds }
	\label{fig:estimate grid_saline}
\end{figure}

%%%%%%%%%%%%%%%

We applied the method described in \ref{sec:recons} to reconstruct a Gaussian scalar field using $100$ robots, whose communication network either has a chain topology or a grid topology.  The simulations were performed on a normalized domain of size 1 m $\times$ 1 m. The field was reconstructed over a time period of 50 sec using the data from a set of 30 accessible robots. \ref{fig:estimate chain} and \ref{fig:estimate grid} illustrate the results from using the chain and grid topologies, respectively. Each figure shows the contour plots of the actual field, the estimated field, and absolute value of the error between these plots.  From these plots, it is evident that the grid topology yields a much more accurate reconstruction of the field than the chain topology, even though both networks can be characterized as observable.  % from the accessible robots. 

%\spr{Discussion of results}

In order to test the performance of our technique in a practical scenario, we applied it to real data on the salinity (psu) of a section of the Atlantic ocean at a depth of $25$ m, obtained from \cite{nationalcentersforenvironmentinformation2015}.  The field was reconstructed over a time period of 50 sec using $100$ robots with a grid communication topology and 30 accessible robots whose temporal data was sampled at 10 Hz.  The contour plots in \ref{fig:estimate grid_saline} show that the estimated salinity field reproduces the key features of the actual field with reasonable accuracy.

%produces a fairly accurate estimate of the actual salinity field.

%We used a grid topology with $100$ robots, of which $ 30 $  were accessible robots with a temporal data of $ 50 $ sec sampled at $10$ Hz 

%The grid has a pair of diagonally opposite corners whose coordinates in terms of latitudes and longitudes are given by $ (5, 262) $ and $ (32, 305) $. 

%The results of real data estimation performed using $ 100 $ robots of which $ 30 $  were used as output nodes with a temporal data of $ 50 $ sec sampled at $10$ Hz is shown in \ref{fig:estimate grid_saline}. These figures suggest that methodology did a satisfactory work in estimating the data.  

\section{COMPARISON OF NETWORK TOPOLOGIES}  
\label{sec:comp_net_toplogy}

%%%%%%%%%%%%%%% figure

\begin{figure}[t]
	\centering
		\hspace{-3mm}
		\subfigure[Networks with $100$ nodes  \label{fig:bound 100} ]{\includegraphics[scale=0.55]{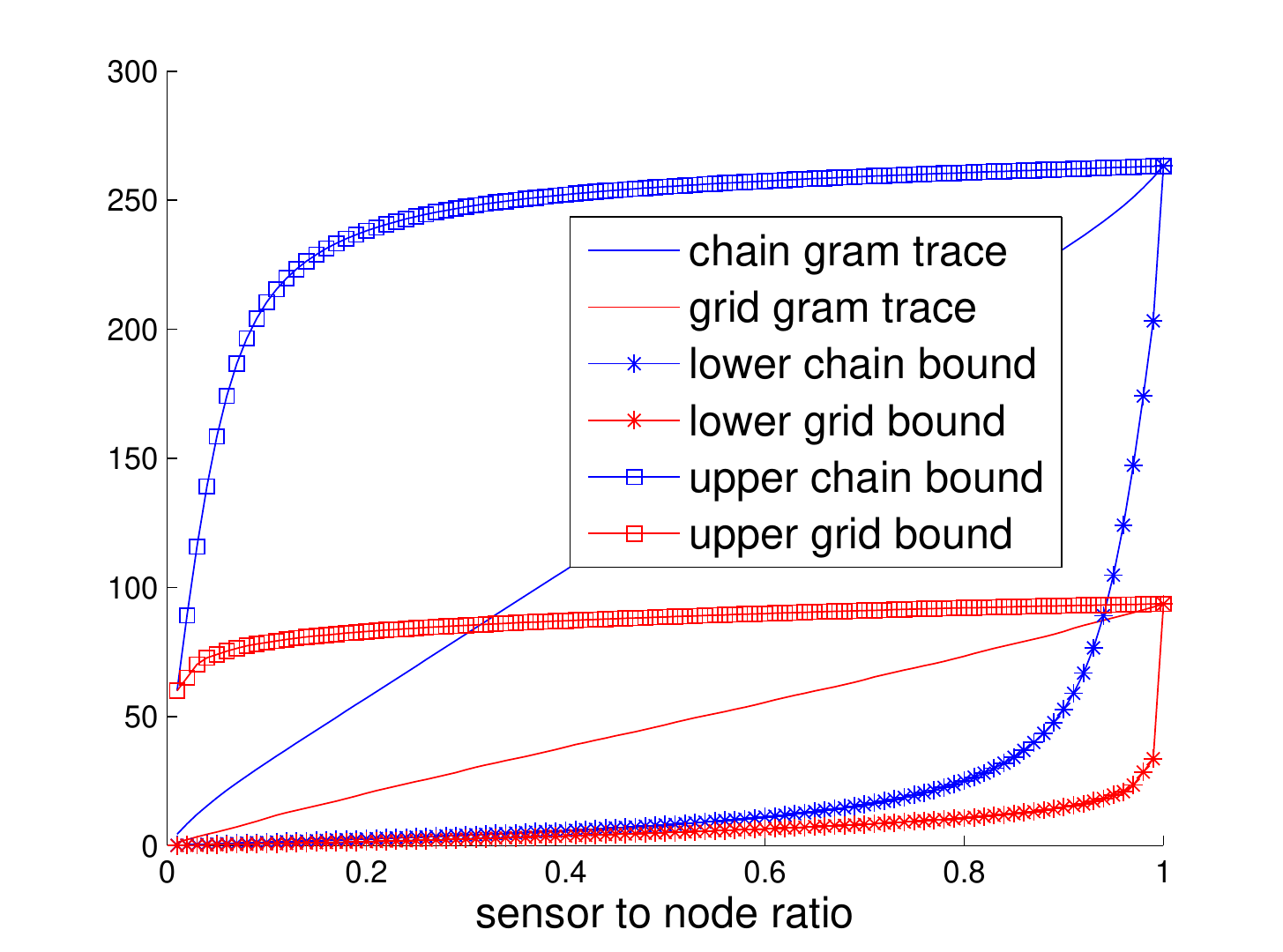}}		
		%\centering
		\subfigure[Networks with $10000$ nodes \label{fig:bound 10000}]{\includegraphics[scale=0.55]{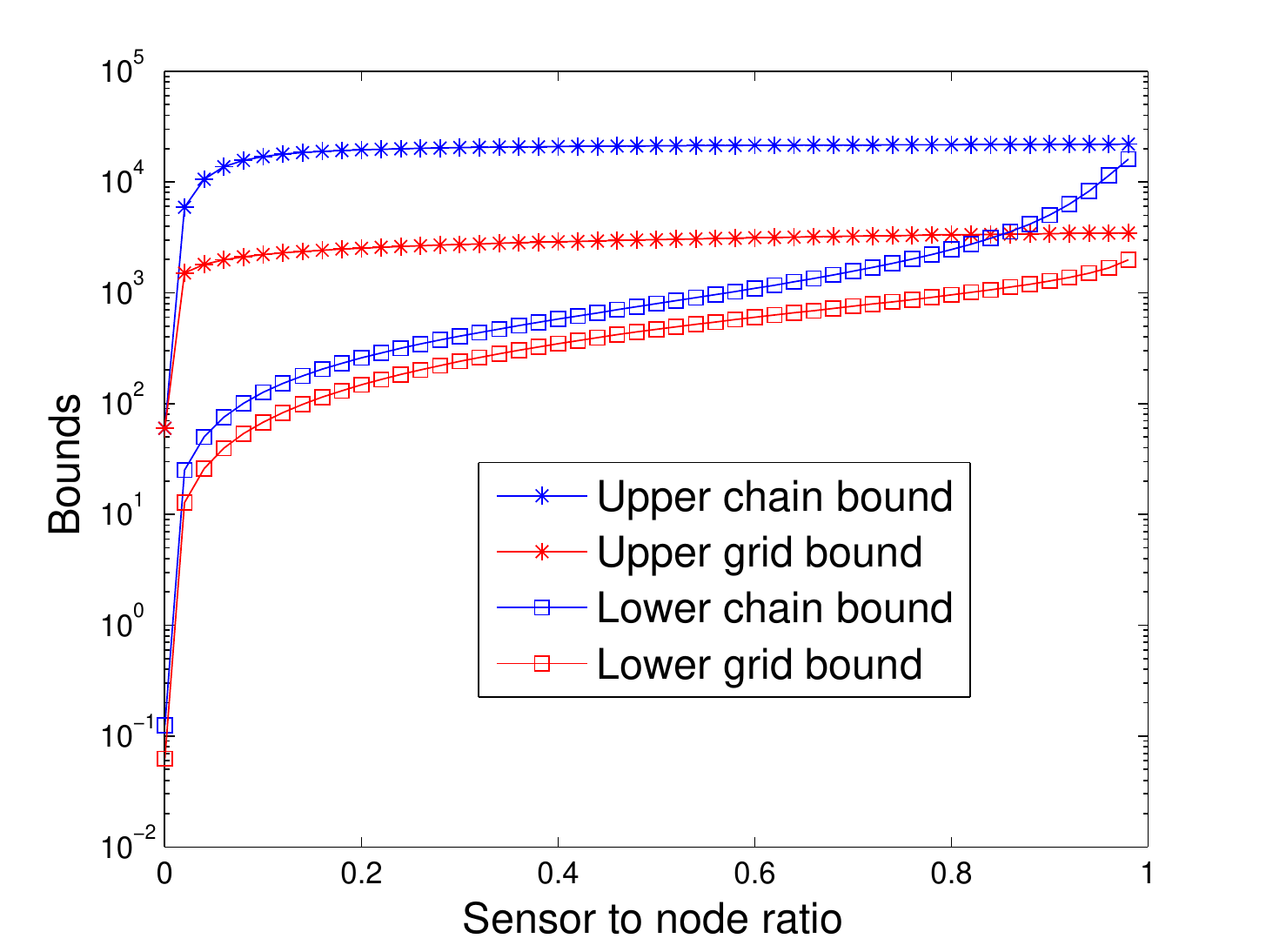}} 	
	\caption{Comparison of the degree of observability based on trace of observability Gramian using its bounds. Trace computation shown in \ref{fig:bound 100} is done numerically using eigenvectors of $ \mathbf{L}(\mathcal{G}) $ }
	\label{fig:Comparison}
\end{figure}

%\begin{figure}[t]
%	\centering
%	\begin{tabular}{cc}
%		\centering	
%		\hspace{-3mm}
%		\subfigure[Networks with $100$ nodes  \label{fig:bound 100} ]{\includegraphics[width=0.5\linewidth,height = 4cm ]{}}		
%		&
%		%\centering
%		\subfigure[Networks with $10000$ nodes \label{fig:bound 10000}]{\includegraphics[width=0.5\linewidth,height = 4cm ]{}} 	
%	\end{tabular}
%	\caption{Comparison of the degree of observability based on trace of observability Gramian using its bounds. Trace computation shown in \ref{fig:bound 100} is done numerically using eigen vectors of $ \mathbf{L}(\mathcal{G}) $ }
%	\label{fig:Comparison}
%\end{figure}

%%%%%%%%%%%%%%%

% we would like to compare the results from \ref{sec:simulation} and

In this section, we analyze the effect of network topology on the accuracy of the field estimation as the number of robots in the network increases. Comparing the results in \ref{fig:estimate chain} and \ref{fig:estimate grid}, it is evident that there is some fundamental limitation arising from the network structure which makes the system with the chain topology practically unobservable.  In the control theory literature, the {\it degree of observability} is used as a metric of a system's observability \cite{Muller:1972:AOC:2244776.2244812}. The {\it observability  Gramian} $\mathit{W}_O(0,T)$ can be used to compute the initial state of an observable linear system from output data over time $t \in [0~T]$ \cite{Hespanha2009}.  This makes it a good candidate for use in quantifying the relative observability among different systems. Due to the duality of controllability and observability, the results associated with one of these properties can be used for the other if interpreted properly.  Commonly used measures of the degree of observability (controllability) are the smallest eigenvalue, the trace, the determinant, and the condition number of the observability (controllability) Gramian \cite{Pasqualetti20147039448,Yan2015,EnyiohaRPJ14}.  For large, sparse networked systems, the Gramian can be highly ill-conditioned, which makes numerical computation of its minimum eigenvalue unstable.  Although researchers have computed bounds on the minimum eigenvalues of the Gramian \cite{PasqualettiZB13}, these bounds did not help to us arrive at a conclusion since they were too close together.

%Other measures include the condition number  \cite{EnyiohaRPJ14} and the smallest eigenvalue \cite{Pasqualetti20147039448} of the Gramian, the latter being the most commonly used metric.  

%Another metric for observability (controllability) that is found in the literature is condition number of the Gramian \cite{EnyiohaRPJ14}. The most commonly used metric of observability (controllability) is the smallest eigenvalue of the Gramian \cite{Pasqualetti20147039448}. 

%This observation shows that there are some fundamental limitation due to the network structure which makes the chain practically unobservable. 
%In other words there is something more to be said for the question of whether a system is practically observable rather than simply a 'yes' or 'no. 

These factors prompted us to use the trace of the observability Gramian as our metric for the degree of observability. Analogous to the interpretation of the controllability Gramian in \cite{PasqualettiZB13}, the trace of the observability Gramian can be interpreted as the average sensing effort applied by a system to estimate its initial state. For a communication network represented by $\mathcal{G}$ with information flow dynamics given by \ref{eqn:sys_dyn}, the trace of the observability Gramian $\mathit{W}_O(0,T)$ is defined as %  \spr{the finite time $T$} 
\begin{equation}
Trace(\mathit{W}_O(0,T)) = Trace\left ( \int_{0}^{T}\mathbf{e}^{- \mathbf{L}(\mathcal{G})^*t} \mathbf{C}^*\mathbf{C}\mathbf{e}^{- \mathbf{L}(\mathcal{G})t}dt \right ).
\label{eqn:trace_gram}
\end{equation}

Following steps similar to those in \cite{PasqualettiZB13}, we use \ref{theorem:trace_bounds} below to derive upper and lower bounds on the trace of the observability Gramian for networks with chain and grid topologies.  \ref{fig:Comparison} compares these lower and upper bounds for two node populations as a function of the sensor-to-total-node ratio.  It is clear from the plots that the average sensing effort required by the chain network is greater than that of the grid network for a given measurement energy, which is defined as $\left \| \mathbf{Y}(t)  \right \|^2_{L^2([0\ T],\mathbb{R}^k)}$ \cite{PasqualettiZB13}, where $ \mathbf{Y}(t)$ is obtained from \ref{eqn:sys_out}.

%of $\mathbf{L}(\mathcal{G}_c)$ and $\mathbf{L}(\mathcal{G}_g)$. 

\vspace{3mm}

\newtheorem{theorem}{Theorem}
\begin{theorem}
	\label{theorem:trace_bounds}
	Let $\mathcal{G}$ be an unweighted, undirected graph that represents the communication network of a set of $N$ robots with information dynamics and output map given by \ref{eqn:sys_dyn} and \ref{eqn:sys_out}, respectively.  If we label $\mathit{V(\mathcal{G})} $ such that  $k$  observable nodes(robots) in  $\mathit{V(\mathcal{G})} $ where $k \leq N$ are labeled as $1, 2, ..., k$, then $\mathbf{C} = \left[ \mathbf{I}_{k \times k}\ \mathbf{0}_{k \times (N - k)} \right]$. Assuming that $\mathbf{L}(\mathcal{G})$ is diagonalizable and $\lambda_1 \geq \lambda_2 \geq ... \geq \lambda_N $ are its eigenvalues, then there exist real constants $c_1 \leq c_2 \leq ... \leq c_N$ such that  % \ c_1,c_2,...,c_N \in \mathbb{R} 
	\begin{equation}
 \sum_{i=1}^{k} c_i  ~\leq ~Trace\left( \mathit{W}_O(0,T) \right) ~\leq~ \sum_{i=0}^{k-1} c_{n-i},
	\label{eqn:trace_bounds}
	\end{equation}
	where $c_i = \int^{T}_{0} e^{-2\lambda_i t} dt$. \\
\end{theorem}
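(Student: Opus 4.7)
The plan is to exploit the spectral decomposition of $\mathbf{L}(\mathcal{G})$ to rewrite $Trace(W_O(0,T))$ as a linear combination of the $c_i$ whose coefficients lie in a simple polytope, then close the argument by a rearrangement bound.

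First, since $\mathcal{G}$ is undirected, $\mathbf{L}(\mathcal{G})$ is symmetric, so I would write $\mathbf{L}(\mathcal{G}) = \mathbf{V}\mathbf{\Lambda}\mathbf{V}^{T}$ with $\mathbf{V}$ orthogonal and $\mathbf{\Lambda} = \mathrm{diag}(\lambda_1,\ldots,\lambda_N)$, so that $e^{-\mathbf{L}(\mathcal{G})t} = \mathbf{V}e^{-\mathbf{\Lambda}t}\mathbf{V}^{T}$. Substituting into \ref{eqn:trace_gram}, exploiting $\mathbf{L}^{*} = \mathbf{L}$, and applying the cyclic invariance of the trace gives $Trace(W_O(0,T)) = \int_{0}^{T} Trace\!\left(e^{-2\mathbf{\Lambda}t}\mathbf{M}\right)\!dt = \sum_{i=1}^{N} c_{i}\,M_{ii}$, where $\mathbf{M} = \mathbf{V}^{T}\mathbf{C}^{*}\mathbf{C}\mathbf{V}$ and $c_i = \int_{0}^{T} e^{-2\lambda_i t}\,dt$.

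Next, I would characterize the diagonal entries of $\mathbf{M}$. Because $\mathbf{C}^{*}\mathbf{C}$ is the orthogonal projection onto the first $k$ coordinate axes, $M_{ii}$ equals the squared Euclidean norm of the first $k$ components of the $i$-th eigenvector $v_i$, so $0 \leq M_{ii} \leq \|v_i\|^{2} = 1$. Summing and using orthogonality of $\mathbf{V}$ yields $\sum_{i=1}^{N} M_{ii} = Trace(\mathbf{C}^{*}\mathbf{C}) = k$. Hence $(M_{11},\ldots,M_{NN})$ lies in the polytope $\mathcal{P} = \{m \in [0,1]^{N} : \sum_{i} m_i = k\}$.

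Finally, because the hypothesis $\lambda_1 \geq \cdots \geq \lambda_N$ implies $c_1 \leq c_2 \leq \cdots \leq c_N$, the linear functional $m \mapsto \sum_{i} c_{i}\,m_i$ over $\mathcal{P}$ is minimized by placing unit weight on the $k$ indices carrying the smallest coefficients, yielding the lower bound $\sum_{i=1}^{k} c_i$, and maximized by placing unit weight on the $k$ largest, yielding the upper bound $\sum_{i=0}^{k-1} c_{N-i}$. Since the true diagonal of $\mathbf{M}$ is a feasible point of $\mathcal{P}$, the inequalities \ref{eqn:trace_bounds} follow. The main obstacle is really just book-keeping: the descending eigenvalue convention used in the theorem statement is opposite to the ascending convention adopted earlier in the paper, so I would be careful to verify that it does induce $c_1 \leq \cdots \leq c_N$ and that the extremal index sets from the rearrangement step align with the two sums appearing in \ref{eqn:trace_bounds}; the substantive steps --- spectral reduction and the polytope characterization --- are routine once the orthogonal diagonalization is in place.
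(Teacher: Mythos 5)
Your proof is correct, and it diverges from the paper's in the final bounding step in a way worth noting. The paper performs the same spectral reduction but packages it as $Trace(\mathbf{P}\mathbf{M})=\sum_{i=1}^{k}M_{ii}$ with $\mathbf{M}=\mathbf{V}\left(\int_{0}^{T}\mathbf{e}^{-2\Lambda t}dt\right)\mathbf{V}^{*}$, i.e.\ a sum of $k$ diagonal entries of a Hermitian matrix whose eigenvalues are the $c_i$; it then obtains the lower bound by citing a diagonal-versus-eigenvalue partial-sum inequality (Theorem 1 of the cited Daboul reference, a Ky Fan/Schur--Horn type result) and the upper bound by citing Von Neumann's trace inequality. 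You instead conjugate the projection rather than the diagonal factor, writing the trace as $\sum_{i}c_{i}M_{ii}$ with $M_{ii}=\|\mathbf{C}v_{i}\|^{2}$, observe that the weight vector lies in the hypersimplex $\{m\in[0,1]^{N}:\sum_{i}m_{i}=k\}$, and extract both bounds at once by extremizing a linear functional over that polytope at its $0$--$1$ vertices. The two framings are equivalent (related by cyclic invariance of the trace), but yours is more self-contained: it replaces two separately cited matrix inequalities with one elementary rearrangement argument, and it makes transparent exactly which structural facts about $\mathbf{C}^{*}\mathbf{C}$ are used (each weight in $[0,1]$, weights summing to $k$). The paper's version has the mild advantage of directly exposing $\mathbf{M}$ as a Hermitian matrix whose eigenvalues are the $c_i$, which connects to the broader Gramian-bounding literature it follows. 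Your closing caveat about the descending eigenvalue convention is the right thing to check, and it does work out: $\lambda_{1}\geq\cdots\geq\lambda_{N}$ gives $c_{1}\leq\cdots\leq c_{N}$, so the extremal vertex sets are $\{1,\ldots,k\}$ and $\{N-k+1,\ldots,N\}$, matching the two sums in the statement (the paper's $c_{n-i}$ should be read as $c_{N-i}$).
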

	
	\begin{proof}
		From the definition of the trace operator, it can be shown that the trace and integral operators are commutative. Using this property and the property that the trace operator is invariant under cyclic permutation \cite{Horn:1985:MA:5509}, \ref{eqn:trace_gram} can be written as
		\begin{align}
 Trace(\mathit{W}_O(0,T)) && \nonumber \\
	&& \hspace{-3cm}	= \int_{0}^{T}Trace\left (\mathbf{e}^{- \mathbf{L}(\mathcal{G})^*t} \mathbf{C}^*\mathbf{C}\mathbf{e}^{- \mathbf{L}(\mathcal{G})t}\right) dt \nonumber\\
		%%%%%%
	&&	\hspace{-3cm} = \int_{0}^{T}Trace\left ( \mathbf{C}^*\mathbf{C}\mathbf{e}^{\left( - \mathbf{L}(\mathcal{G})t - \mathbf{L}(\mathcal{G})^*t\right)}\right) dt. \nonumber
		\end{align}
	Since the Laplacian of an unweighted, undirected graph is a Hermitian matrix, this equation becomes
		\begin{equation}
		Trace(\mathit{W}_O(0,T))  = \int_{0}^{T}Trace\left ( \mathbf{C}^*\mathbf{C}\mathbf{e}^{- 2 \mathbf{L}(\mathcal{G})t}\right) dt. \nonumber
		\end{equation}
		Let $ \mathbf{L}(\mathcal{G}) = \mathbf{V} \Lambda \mathbf{V}^*$ such that $ \Lambda =Diag(\lambda_1,\lambda_2,...,\lambda_N)$ and the columns of $ \mathbf{V} \in \mathbb{R}^{N \times N} $ are given by the corresponding eigenvectors of   $ \mathbf{L}(\mathcal{G}) $. Then using the decomposition of the matrix exponential \cite{Strang88}, the equation becomes
		\begin{equation}
	Trace(\mathit{W}_O(0,T)) =	\int_{0}^{T}Trace\left ( \mathbf{C}^*\mathbf{C}\mathbf{V}\mathbf{e}^{- 2 \Lambda t}\mathbf{V}^*\right) dt \nonumber
		\end{equation}  
		%%%
		\begin{equation}
		= ~Trace\left ( \mathbf{C}^*\mathbf{C}\mathbf{V}\left (  \int_{0}^{T}\mathbf{e}^{- 2 \Lambda t}dt\right )\mathbf{V}^*\right).   \nonumber
		\end{equation}  
		The matrix exponential $  \int_{0}^{T}\mathbf{e}^{- 2 \Lambda t}dt $ is a diagonal matrix given by $ Diag\left( \int_{0}^{T}\mathbf{e}^{- 2 \lambda_1 t}dt, \int_{0}^{T}\mathbf{e}^{- 2 \lambda_2 t}dt, ... ,\int_{0}^{T}\mathbf{e}^{- 2 \lambda_N t}dt\right)  $. We define $ c_i = \int_{0}^{T}\mathbf{e}^{- 2 \lambda_i t}dt$.  Then, since $\lambda_1 \geq \lambda_2 \geq ... \geq \lambda_N$, by definition we have that $ c_1 \leq c_2 \leq ... \leq c_N$. %,\ c_1,c_2,...,c_N $. 
		
		Let $ \mathbf{M} =  \mathbf{V}\left ( \int_{0}^{T}\mathbf{e}^{- 2 \Lambda t}dt\right )\mathbf{V}^*$. Then we see that $\mathbf{M}$ is a Hermitian matrix with eigenvalues $c_1,c_2,...,c_N$ and the same eigenvectors as $\mathbf{L}(\mathcal{G})$. Also, we find that $ \mathbf{C}^*\mathbf{C} $ is a diagonal matrix with the first $k$ diagonal elements equal to $1$ and the rest equal to 0. Defining $\mathbf{P} = \mathbf{C}^*\mathbf{C} $, we obtain a compact form for the trace of the observability Gramian,
		\begin{equation}
		Trace\left( \mathit{W}_O(0,T) \right) = Trace\left ( \mathbf{P}\mathbf{M}\right).  
		\label{eqn:trace_compact} 
		\end{equation}  
		
		 \ref{eqn:trace_compact} can be reduced to:
		\begin{equation}
		 Trace\left( \mathit{W}_O(0,T) \right) = Trace\left ( \mathbf{P}\mathbf{M}\right) = \sum_{i = 1}^{k} M_{ii}, \nonumber
		\end{equation}  
		where $M_{ii}$ denotes the $i^{th}$ diagonal entry of $ \mathbf{M}$. 
		
		From Theorem 1 of \cite{daboul1990inequalities}, we obtain the following lower bound:
		\begin{equation}
		Trace\left ( \mathit{W}_O(0,T)\right) = \sum_{i = 1}^{k} M_{ii}  \geq \sum_{i=1}^{k} c_i.
		\label{eqn:trace lower bound}
		\end{equation}
		  
		Now by applying \textit{Von Neumann's trace inequality} \cite{Horn:1985:MA:5509} to \ref{eqn:trace_compact} and because $ \mathit{W}_O(0,T) $ is at least positive semidefinite, we find that
		\begin{equation}
		  Trace\left ( \mathbf{P}\mathbf{M}\right) \leq \sum_{i = 0}^{n-1} \sigma \left ( \mathbf{P}\right)_{n-i} \sigma \left ( \mathbf{M}\right)_{n-i} \nonumber
		\end{equation}
		where $\sigma(\cdot)_i$ is the $i^{th}$ singular value of a matrix.  The singular values are arranged in increasing order, $\sigma(\cdot)_1 \leq \sigma(\cdot)_2 \leq ... \leq \sigma(\cdot)_N$, and here they coincide with the eigenvalues of the matrices. Note that only the last $k$ eigenvalues of $\mathbf{P}$ are nonzero and are equal to 1.  Thus, we obtain the upper bound:
		\begin{equation}
	    Trace\left( \mathit{W}_O(0,T) \right) \leq 	\sum_{i=0}^{k-1} c_{n-i}.
	    \label{eqn:trace upper bound}
		\end{equation}	
			
		\end{proof}
		
We can compute these bounds on the trace of the observability Gramian for $\mathbf{L}(\mathcal{G}_c)$ and $\mathbf{L}(\mathcal{G}_g)$ since the eigenvalues of these matrices can be obtained analytically \cite{Edwards2013}.

%	Applying \ref{theorem:trace_bounds}, .  
		
\section{PERFORMANCE ANALYSIS}
\label{sec:Perform analyse}

In this section, we analyze the effect of noise on the output of first-order linear dynamics that evolve on chain and grid network topologies. We assume that the data at each node in the network is affected by white noise with zero mean and unit covariance. Therefore, the augmented system dynamics described by \ref{eqn:sys_dyn} can be written as
	\begin{equation}
	\dot{\mathbf{X}}(t) = - \mathbf{L}(\mathcal{G}) \mathbf{X}(t) + \mathit{W}, \\ 
	 \label{eqn:Aug_sys_dyn}
	\end{equation}
where $\mathit{W} \in \mathbf{R}^N$ denotes a zero mean, unit covariance white noise process.  The output equation is the same as \ref{eqn:sys_out}.

As defined in the robust control literature, the $\mathcal{H}_2$ norm of a system gives the steady-state variance of the output when the input to the system is white noise and when $-\mathbf{L}(\mathcal{G}) $ is Hurwitz \cite{Dullerud2000opac-b1098305}. However, for unstable systems, the finite steady-state variance can be computed only when the unstable modes are unobservable from the outputs \cite{bamjovmitpat12}. For $\mathbf{L}(\mathcal{G})$, zero is the only unsteady mode with corresponding eigenvector $\mathbf{1}_N$, which does not affect the steady-state variance of the output. If we can make the zero mode unobservable, then it is still possible to use the  $\mathcal{H}_2 $ norm as a measure to quantify the effect of noise on the system output. 

In order to do so, we follow the approach in \cite{siami2014graph}, which uses the \textit{first-order Laplacian energy}. This quantity is essentially the $\mathcal{H}_2 $ norm of a system if the matrix $\mathbf{C}$ in \ref{eqn:sys_out} is chosen in such a way that it annihilates the vector $\mathbf{1}_N$. This can be done by defining $\mathbf{C}$ to be an incidence matrix of a graph $\mathcal{G}_k$. Denoting this new $\mathbf{C}$ by $\hat{\mathbf{C}}$, we have that $\mathbf{L}(\mathcal{G}_k)  = \hat{\mathbf{C}}^T\hat{\mathbf{C}}$. Then $\mathbf{L}(\mathcal{G}_k) \mathbf{1}_N = 0$, which implies that $\hat{\mathbf{C}} \mathbf{1}_N = 0$ since $ ker(\hat{\mathbf{C}}) = ker(\hat{\mathbf{C}}^T\hat{\mathbf{C}}) $. Note that $ \hat{\mathbf{C}} $ need not necessarily be the incidence matrix of a graph $ \mathcal{G}_k$; the only condition required is that $\hat{\mathbf{C}}^T\hat{\mathbf{C}} = \mathbf{L}(\mathcal{G}_k)$. 

Now, if $\mathcal{G}_k$ is chosen to be a weighted complete graph $\mathcal{K}_N$ whose edges all have weight $\frac{1}{N}$, then $ \mathbf{L}(\mathcal{G}_k) = \mathbf{I}_{N \times N} - \frac{1}{N} \mathbf{J}_N$.   The first-order Laplacian energy, $\mathcal{H}_{\mathcal{K}_N} ^{(1)}(\mathbf{L}(\mathcal{G}))$, for the corresponding  $ \mathbf{C} $ can be defined from \cite{siami2014graph} as
\begin{equation}
\mathcal{H}_{\mathcal{K}_N} ^{(1)}(\mathbf{L}(\mathcal{G})) = \sum_{i = 1}^{N-1}\frac{1}{2\lambda_i},
\label{eqn:Laplacian energy}
\end{equation}
 where $\lambda_1 \geq \lambda_2 \geq ... \geq \lambda_N = 0 $ are the eigenvalues of  $\mathbf{L}(\mathcal{G})$.

%such for all $e \in \mathit{E(\mathcal{G})}$, the weight of $e$ is $\frac{1}{N} $,

In \ref{fig:performance}, we compare $\mathcal{H}_{\mathcal{K}_N} ^{(1)}(\mathbf{L}(\mathcal{G}))$ for graphs with grid and chain network topologies as a function of the total number of nodes in the network.  The plot shows that the grid network is more effective than the chain network at mitigating the effect of noise on the system output for a given number of nodes.

%\ref{fig:performance} shows the performance measure for graphs with grid and chain topologies. 

%%%%%%%%%%%%%%%% Figure

\begin{figure}[t]
	\centering
		\includegraphics[scale=0.55]{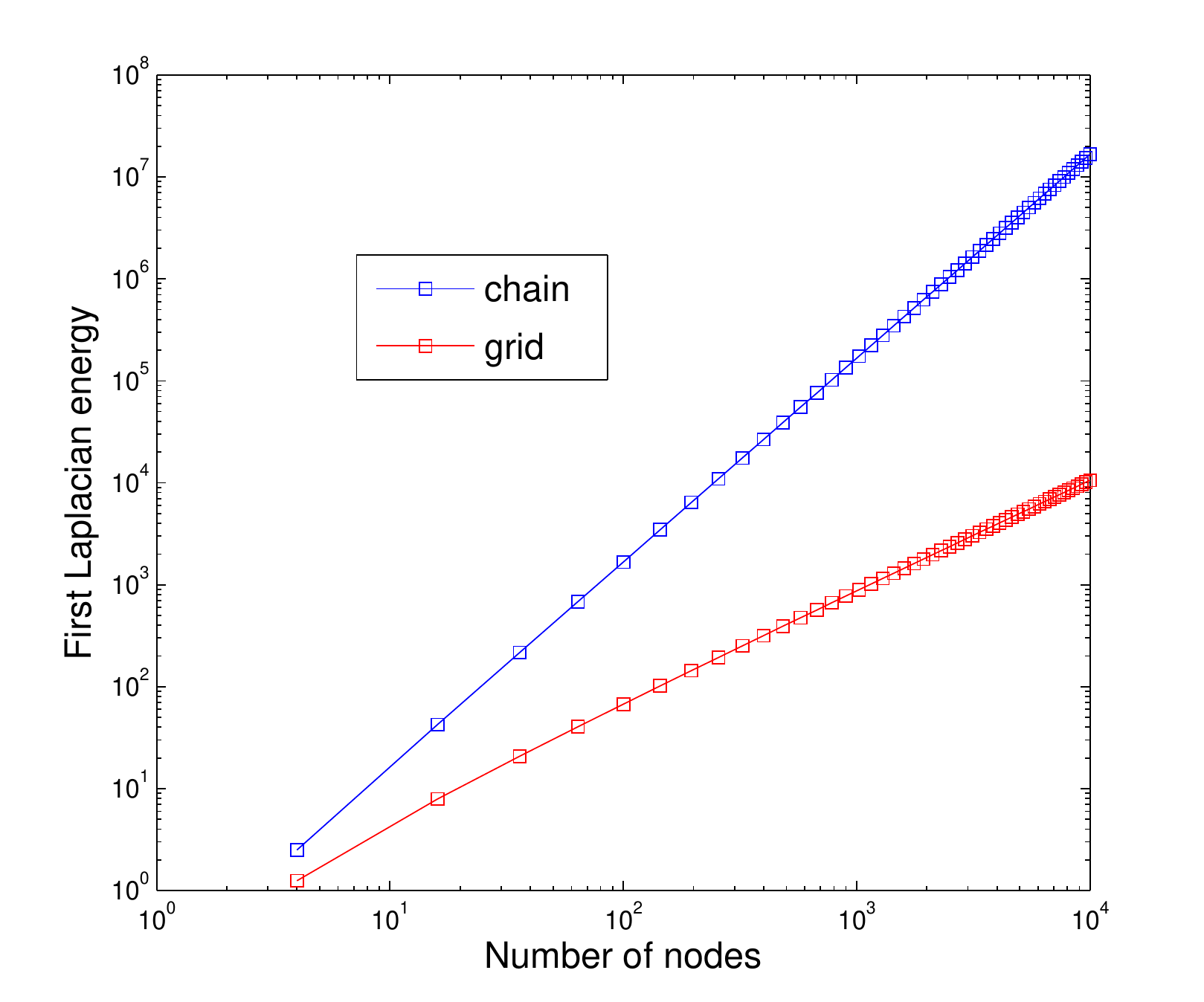}
		\caption{Performance measure based on the first Laplacian energy}
		\label{fig:performance}
	\end{figure}

%%%%%%%%%%%%%%%%

\section{CONCLUSIONS}
\label{sec:conclusion}

In this work, we have presented a methodology to estimate the initial state of a large networked system of robots with first-order linear information dynamics using output measurements from a subset of robots.  We have quantified the advantages of a grid network over a chain network in the estimation of a two-dimensional scalar field, even though both networks can be made observable by construction. We have also used a performance measure based on the  $\mathcal{H}_2$ norm of the network to characterize the robustness of the network dynamics based on its structure. A straightforward extension of this work is to compare the chain and grid topologies with similar degree distributions using the same methodology. Another interesting aspect to investigate is the effect of structural uncertainty in the networks, which could be done by quantifying the observability radius of the network systems, as defined in \cite{bianchinobservability2016}. In addition, we would like to compare network topologies in an alternative way by viewing $\mathbf{L}(\mathcal{G}_c)$ and $\mathbf{L}(\mathcal{G}_g)$ as approximations to the Laplace operator for 1D and 2D heat equations and analyzing the Gramians of these partial differential equations \cite{van2000selection}. % \spr{What insight will this give?}.

% intuition behind the using

\addtolength{\textheight}{-16cm}   % This command serves to balance the column lengths
                                  % on the last page of the document manually. It shortens
                                  % the textheight of the last page by a suitable amount.
                                  % This command does not take effect until the next page
                                  % so it should come on the page before the last. Make
                                  % sure that you do not shorten the textheight too much.

%%%%%%%%%%%%%%%%%%%%%%%%%%%%%%%%%%%%%%%%%%%%%%%%%%%%%%%%%%%%%%%%%%%%%%%%%%%%%%%%

%%%%%%%%%%%%%%%%%%%%%%%%%%%%%%%%%%%%%%%%%%%%%%%%%%%%%%%%%%%%%%%%%%%%%%%%%%%%%%%%

%%%%%%%%%%%%%%%%%%%%%%%%%%%%%%%%%%%%%%%%%%%%%%%%%%%%%%%%%%%%%%%%%%%%%%%%%%%%%%%%
%\section*{APPENDIX}
%
%Appendixes should appear before the acknowledgment.

\section*{ACKNOWLEDGMENT}
R.K.R. thanks Karthik Elamvazhuthi for his valuable inputs on this work, especially regarding the change of variables in the derivation of the gradient.

\bibliographystyle{IEEEtran} %plain}
\bibliography{IEEEabrv,citation_ref_file}

\end{document}